\def\eqref#1{equation~\ref{#1}}
\def\1{\bm{1}}
\DeclareMathAlphabet{\mathsfit}{\encodingdefault}{\sfdefault}{m}{sl}
\SetMathAlphabet{\mathsfit}{bold}{\encodingdefault}{\sfdefault}{bx}{n}
\newcommand{\E}{\mathbb{E}}
\newcommand{\ns}[1]{{\color{orange}NS: #1}}
\newcommand{\sjr}[1]{{\color{red}SJR: #1}}
\newcommand{\Qq}[1][t]{Q^{(#1)}}
\newcommand{\Aa}[1][t]{A^{(#1)}}
\newcommand{\Aas}[1][t]{{A^{(#1)}}^*}
\newcommand{\Bb}[1][t]{{B^{(#1)}}}
\newcommand{\Aat}[1][t]{{\tilde{A}^{(#1)}}}
\newcommand{\Aaa}[1][t]{A}
\newcommand{\Pl}[1][t]{P^{(#1)}}
\newcommand{\Zz}[1][t]{Z^{(#1)}}
\newcommand{\dd}[1][t]{u^{(#1)}}
\newcommand{\lossone}[1][A]{L_1(\mathbf{#1})}
\newcommand{\TF}[1]{\text{TF}({#1}, \{\Aa[t],\dd[t]\}_{t=0}^{L-1})}
\newcommand{\TFl}[1]{\text{TF}({#1}, \{\Aa[t],\dd[t]\}_{t=0}^{\ell-1})}
\newcommand{\TFd}[1]{\text{TF}({#1}, \{\Aa[t],0\}_{t=0}^{L-1})}
\newcommand{\TFdd}[1]{\text{TF}({#1}, \{A,0\}_{t=0}^{L-1})}
\newcommand{\TFz}[1]{\text{TF}({#1}, \{P^{(t)}, Q^{(t)}\}_{t=0}^{L-1})}
\newcommand{\TFzl}[1]{\text{TF}({#1}, P, Q)}
\newcommand{\en}[1]{\{#1\}}
\newcommand{\Zzerodef}{\begin{tabular}{c c}
     $X$ & 0 \\ 
     $y^\top$ & 0
\end{tabular}}
\newcommand{\losss}{\mathcal L}
\newcommand{\norm}[1]{\left\lVert #1 \right\rVert}
\newcommand{\pa}[1]{\left( #1\right)}
\newcommand{\trace}[1]{\text{trace}{\pa{#1}}}
\newcommand{\As}{A^*}
\newcommand{\LSA}[1]{\text{Attn}^{lin}\left({#1}; W_{k,q,v}\right)}
\newcommand{\LSAa}[1]{\text{Attn}^{lin}\left({#1}; Q^{(t)},P^{(t)}\right)}
\newcommand{\LSAaw}[1]{\text{Attn}^{lin}\left({#1}; Q,P\right)}
\newtheorem{lemma}{Lemma}
\newtheorem{remark}{Remark}[section]
\newtheorem{theorem}{Theorem}
\newtheorem{definition}{Definition}
\title{On the Role of Depth and Looping for In-Context Learning with Task Diversity}
\author{
 Khashayar Gatmiry \\
  MIT \\
  \texttt{gatmiry@mit.edu} \\
   \And
 Nikunj Saunshi \\
   Google Research\\
  \texttt{nsaunshi@google.com} \\
  \And
 Sashank J. Reddi \\
  Google Research \\
  \texttt{sashank@google.com} \\
  \And
  Stefanie Jegelka\\
  $\text{TUM}^{*} \text{and} 
 ~\text{MIT}^{\dagger}$\\
  \texttt{stefje@mit.edu}\\
  \And
 Sanjiv Kumar\\
  Google Research\\
  \texttt{sanjivk@google.com} \\
}
\begin{document}
\maketitle
\footnotetext[1]{CIT, MCML, MDSI}
\footnotetext[2]{EECS and CSAIL}

\begin{abstract}
The intriguing in-context learning (ICL) abilities of \emph{deep Transformer models} have lately garnered significant attention. By studying in-context linear regression on unimodal Gaussian data, recent empirical and theoretical works have argued that ICL emerges from Transformers' abilities to simulate learning algorithms like gradient descent. However, these works fail to capture the remarkable ability of Transformers to learn \emph{multiple tasks} in context.
To this end, we study in-context learning for linear regression with diverse tasks, characterized by data covariance matrices with condition numbers ranging from $[1, \kappa]$, and highlight the importance of depth in this setting. More specifically, (a) we show theoretical lower bounds of $\log(\kappa)$ (or $\sqrt{\kappa}$) linear attention layers in the unrestricted (or restricted) attention setting and, (b) we show that {\em multilayer Transformers} can indeed solve such tasks with a number of layers that matches the lower bounds. However, we show that this expressivity of multilayer Transformer comes at the price of robustness. In particular, multilayer Transformers are not robust to even distributional shifts as small as $O(e^{-L})$ in Wasserstein distance, where $L$ is the depth of the network. We then demonstrate that Looped Transformers ---a special class of multilayer Transformers with weight-sharing--- not only exhibit similar expressive power but are also provably robust under mild assumptions. Besides out-of-distribution generalization, we also show that Looped Transformers are the only models that exhibit a monotonic behavior of loss with respect to depth.%

\end{abstract}

\section{Introduction}
\label{sec:intro}

\looseness-1Transformer-based language models \citep{vaswani2017attention} have demonstrated remarkable in-context learning abilities \citep{brown2020language}, thereby bypassing the need to fine-tune them for specific tasks. This is a desirable feature for large models because fine-tuning them is often expensive. In particular, given just a few samples of a new learning task presented in the context, the model is able to meta-learn the task and generate accurate predictions without having to update its parameters. In an attempt to study this phenomenon, recent work has shown that Transformers can demonstrate such in-context learning abilities on a variety of learning tasks such as linear or logistic regression, decision trees~\citep{von2309uncovering,garg2022can,xie2021explanation} and, perhaps surprisingly, even training Transformers themselves~\citep{panigrahi2023trainable}. This has sparked significant interest to theoretically understand the in-context learning phenomenon. 
From a more theoretical perspective, there are two crucial aspects to this in-context learning ability: (1) the Transformer architecture is powerful enough to implement iterative algorithms, including first order optimization methods such as gradient or preconditioned gradient descent~\citep{li2024fine,von2023transformers,akyurek2022learning}, and (2) Transformers with appropriate optimization algorithm can indeed learn to simulate such algorithms \citep{ahn2024transformers,ourwork}. 
These results provide further insights into how in-context learning could arise within Transformer models. Recent papers (e.g \citep{ahn2024transformers,ourwork}) have studied  both these aspects for Transformers on a \emph{single task} --- linear regression setting where the data is sampled i.i.d Gaussian. 



\looseness-1One of the mysterious features of recent foundation models, which separates them from traditional learning systems, is their ability to handle a spectrum of tasks, from question answering within a broad range of topics, to reasoning or in-context learning for various learning setups. The emergence of this ability to handle task diversity naturally necessitates the training data to incorporate a diverse datasets, which inherently makes the training distribution highly multimodal.
Consequently, foundation models need more capacity to be able to solve such diverse tasks.
However, recent work on in-context learning theory focus on unimodal Gaussian data distribution. In such settings, even constant number of layers suffice to achieve any arbitrarily small error.
Thus, while interesting, these works fail to capture the remarkable ability of Transformers to learn \emph{multiple tasks} in context and explicate the role of depth in these settings. 



\looseness-1To this end, in this paper we introduce a new in-context learning setting, \emph{task-diverse linear regression}, which specifically examines how well Transformers can in-context learn linear regression problems arising from fairly diverse data. More specifically, in task-diverse linear regression, the data can be sampled from multiple Gaussians where the eigenvalues of its covariance matrix can range from $[1, \kappa]$.  For this setting, we ask the following important question: 

    \emph{For multilayer Transformer, what is the role of depth in task-diverse linear regression? Is it possible to establish upper and lower bounds that depend on task diversity?}

In this paper, we resolve this problem by showing lower bound of $\log \kappa$ and matching upper bounds --- indicating a fundamental limit on the depth required to solve the task-diverse linear regression setting. Furthermore, in the special case of attention layers considered in \citep{von2023transformers}, we obtain a lower bound of $\Omega(\sqrt{\kappa})$ instead. Correspondingly, we also show a matching upper bound in this special setting, thereby; providing a comprehensive picture of the representation power of multilayer Transformers in various important settings with task diversity.

While the above results paint a powerful picture of multilayer Transformers, it remains unclear if this comes at any cost. In this paper, we show that this is indeed the case and the price we pay is \emph{robustness}. First, diverging significantly from prior works, we study the problem of \emph{out-of-distribution generalization} on a very general class of distributions (see~\ref{def:right-sidedness}). 
Under this general class of distributions, we show that the out-of-distribution generalization of multilayer Transformer can get exponentially worse with depth. Thus, the strong representation power of multilayer Transformers indeed comes at the price of robustness. This begs an important question: {\em is it possible to achieve both robustness and expressivity at the same time?}

Recently, there has been interest in the special class of Transformers, called Looped Transformers, which have empirically shown to have much more robust out-of-distribution generalization \citep{yang2023looped}. Looped Transformers are essentially multilayer Transformers that share weights across layers. It is natural to ask if Looped Transformers exhibit similar representation-robustness tradeoff as multilayer Transformers. Perhaps surprisingly, we provide a negative answer to this question. In particular, we show that Looped Transformers can match the lower bound of $\log(\kappa)$ on the representative power, similar to multilayer Transformers. However, this does not come at the cost of robustness. In fact, Looped Transformers can achieve very good out-of-distribution generalization. In light of this discussion, we state the following main contributions of our paper.
\begin{itemize}[leftmargin=0.5cm]
    \item
    We consider the task-diverse in-context linear regression where the eigenvalues of the input covariance matrix has eigenvalues in the range $[1,\kappa]$; to handle this class of instances, we show a $\Omega\pa{\sqrt{\kappa}}$ lower bound on the number of required attention heads in the restricted attention case (Section~\ref{subsec:modeling}), and a $\Omega\pa{\log(\kappa)}$ lower bound for the unconstrained case (Theorem~\ref{cor:lowerbound}). The $\Omega\pa{\log(\kappa)}$ lower bound matches the upper bound presented in~\cite{fu2023transformers}. In particular, our lower bounds hold for linear attention as well as for ReLU non-linearity.
    
    \item 
    We show a matching upper bound on the constrained case in Equation~\ref{eq:tfthree} by proving that the Transformer can implement multiple steps of the Chebyshev iteration algorithm (Theorem~\ref{thm:matchingupper}).

    \item We design an adaptive termination condition for multilayer Transformers based on the norm of the residuals; thereby, providing an flexible and efficient mechanism for early stopping with fewer layers (Theorem~\ref{thm:termination}). 
    
    \item 
    We show that for certain training distributions on the covariance matrix, multilayer Transformers can overfit, in which case even if we are at a global minimizer with zero population loss and the test distribution has exponentially small deviation of $O(Ld^{-1}9^{-L})$ in Wasserstein distance from the training distribution, the test loss can be up to a constant error (Theorem~\ref{lem:multiblowup}).
    
    \item 
    In contrast to the multilayer case, we show that under a mild condition that the training distribution puts some non-trivial mass on covariance matrices with large eigenvalues, the global minimizer of the training loss for loop Transformers extrapolate to most other distributions that have the same support (Theorem~\ref{lem:looprobust}).
    \item \looseness-1Our theoretical insights regarding how depth make Transformers more task-diverse, and the robust out-of-distribution generalization of Looped Transformers compared to standard multilayer Transformers are validated through experiments in simple linear regression setting.
\end{itemize}

\subsection{Related work}

\textbf{Transformers in Task Diversity.}
 ~\cite{raventos2024pretraining} observe a connection when the level of task diversity in the pretraining distribution is above a threshold and the emergence of strong in-context learning abilities for linear regression bu switching from a Bayesian estimator to ridge regression.
 ~\cite{garg2022can,akyurek2022learning,von2023transformers} observe that transformers can extrapolate to new test distributions.~\cite{min2022rethinking} discover surprising latent task inference capabilities in language models, even when using random labels for in-context demonstrations.~\cite{kirsch2022general} empirically show the benefits of task diversity for in-context learning in classification tasks.~\cite{chan2022data} investigate which aspects of the pretraining distribution enhance in-context learning and find that a more bursty distributions proves to be more effective than distributions that are closer to uniform. The impact of task diversity has also been studied in the context of meta learning~\cite{yin2019meta,kumar2023effect,tripuraneni2021provable} or transfer learning and fine-tuning~\cite{raffel2020exploring,gao2020pile}.

 \textbf{Transformers and Looping as Computational Models.}
 ~\cite{perez2021attention} Transformers are known to be effective computational models and Turing-complete~\cite{perez2019turing,giannou2023looped}.~\cite{garg2022can,akyurek2022learning,von2023transformers,dai2022can} argue that transformers can implement optimization algorithms like gradient descent by internally forming an appropriate loss function.~\cite{allen2023physics} observe that transformers can generate sentences in Context-free Grammers (CFGs) by utilizing dynamic programming. Other work focuses on how training algorithms for Transformers can recover weights that implement iterative algorithms, such as gradient descent, for in-context learning, assuming simplified distributions for in-context instances~\cite{ahn2024linear,zhang2024trained,mahankali2023one}. Notably, while most of these studies focus on the single-layer case, recent work by the authors in~\cite{ourwork} provides a global convergence result for training deep Looped Transformers.

\section{Preliminaries}
\label{sec:prelims}


\subsection{In-context Linear Regression}
\looseness-1We study the role of depth and looping for in-context learning in linear regression. The setting of linear regression for in-context learning has also been studied by~\cite{ourwork,ahn2024transformers,von2023transformers,akyurek2022learning}. We briefly describe the setup here.

\textbf{Linear regression.} We consider the linear regression problem $(X,w^*,y,x_q)$ with data matrix $X \in \mathbb R^{d\times n}$, labels $y$, regressor $w^*$, and query vector $x_q$. In particular, $X$ consists of the observed data inputs $\{x_i\}_{i=1}^n$ as its columns:
    $X = [x_1, \dots, x_n]$.
In this work, we assume that the instances are \emph{realizable} i.e., $Xw^* = y$. 
Thus, $w^*$ can be obtained by the following pseudo-inverse relation:
\begin{align*}
    w^* = (XX^\top)^{-1}Xy.
\end{align*}
We define $\Sigma=XX^{\top}$ to be the input covariance since this will play an important role going forward.

\subsection{Task Diversity \& Robustness}
To handle task diversity, we need a model that not only makes accurate predictions for a fixed distribution, but is also robust to a class of distributions. Here, we focus on the notion of robustness with respect to various distributions on the covariance matrix 
$\Sigma$ of the linear regression instance. 
 It is unreasonable to expect the model to behave well for all possible distributions, so we focus on the set of instances where the eigenvalues of the covariance matrix are in the range $(\alpha, \beta)$ for $0 < \alpha < \beta$.
 
\begin{definition}
We define the set $S_{\alpha, \beta}$ of $(\alpha, \beta)$ ``normal'' linear regression instances as the set instances with a well-conditioned covariance matrix $X^\top X$:
\begin{align*}
    S_{\alpha, \beta} = \left\{(X,w^*,y,x_q) : \alpha I \preccurlyeq XX^\top \preccurlyeq \beta I\right\}.
\end{align*}
\end{definition}

 
 In order for the model to handle task diversity on the set of instances in $S_{\alpha, \beta}$, the loss needs to be small for any distribution $P_{\alpha, \beta}$ that is supported on covariances $\alpha I  \preccurlyeq \Sigma \preccurlyeq \beta I$, which means the model has to be accurate for all the instances in $S_{\alpha, \beta}$. In this work, we are interested in understanding the ability of Transformers to handle task diversity from two angles: (1) expressiveness of the Transformer architecture is in representing task-diverse models (Section~\ref{sec:powerpitfalls}) and (2) robustness of these models in terms of out-of-distribution generalization. We first formally define the models used in this paper before exploring these questions.

\subsection{Transformer Model}\label{subsec:modeling}


\textbf{Self-Attention layer.} Central to our paper is the single attention layer, which we define below. First we define the matrix $Z^{(0)}$ as a $(d+1) \times (n+1)$ matrix by combining the data matrix $X$, their labels $y$, and the query vector $x_q$ as follows:
    $Z^{(0)} = 
    \begin{bmatrix}
            X & x_q \\
             y^\top & 0
        \end{bmatrix}.$

Given the key, query, and value matrices
 $W_k,W_q,W_v$,
we follow~\citep{ahn2024transformers,von2023transformers} and define the self-attention model $\LSA{Z}$ with activation function $\sigma$ as:
\begin{align*}
    &\LSA{Z} \triangleq W_v Z M \sigma(Z^\top {W_k}^\top W_q Z),\\
    &M \triangleq \begin{bmatrix}
            I_{n\times n} & 0 \\
             0 & 0
        \end{bmatrix} \in \mathbb R^{(n+1) \times (n+1)},
\end{align*}
where the index $k\times r$ below a matrix determines its dimensions, and by $\sigma(\mathfrak M)$ for matrix $\mathfrak M$ we mean entry-wise application of $\sigma$ on $\mathfrak M$.
To simplify the notation for analysis, we combine the key and query matrices into $Q$, as $Q = W_k^\top W_q$. Hence, we have the following alternative parameterization:
\begin{align}
    &\LSAaw{Z} \triangleq P Z M \sigma(Z^\top Q Z),
    \label{eqn:attn_def}
\end{align}
where now the learnable parameters are the matrices $P,Q$. We consider the cases where 
$\sigma(x) = Relu(x)$ or $\sigma(x) = x$ is linear. If $\sigma$ is linear, then the model becomes linear self-attention, which was first considered by~\cite{ahn2024transformers,von2023transformers,schlag2021linear} to understand the behavior of in-context learning for linear regression. Note that even with linear attention, the output of the attention layer $\LSAaw{Z}$ is a nonlinear map in $(P,Q)$ or $Z$, and hence challenging to analyze. Since we consider multilayer models, it turns into a low-degree polynomial in either $Z$ or in the set of weights $\{P^{(t)}, Q^{(t)}\}_{t=0}^{L-1}$; the limits of its expressivity is investigated in this work. Furthermore \cite{ahn2024linear} showed that studying linear attention can already provide signal about non-linear attention.
Here, we study the questions of expressivity and robustness.


\textbf{Transformers.} Using $L$ attention heads, we define a Transformer block $\TFz{\Zz[0]}$. In particular, following the notation in~\citep{ahn2024transformers}, we define
\begin{align}
    \Zz[t] \triangleq \Zz[t-1] - \tfrac{1}{n}\LSAa{\Zz[t-1]}.\label{eq:tfupdate}
\end{align}
which results in the recursive relation 
\begin{align}
    \Zz[t+1] = \Zz[t] - \tfrac{1}{n}P^{(t)}{Z^{(t)}} M \sigma({\Zz[t]}^\top Q^{(t)} {\Zz[t]}).\label{eq:tftwo}
\end{align}
Then, the final output of the Transformer is the $((d+1), (n+1))$ entry of $\Zz[t]$:
\begin{align}
        \TFz{\Zz[0]}= -{\Zz[L]}_{(d+1),(n+1)}.\label{eq:tfone}
\end{align}
Note the minus sign in the final output, which is primarily for the ease of our exposition later on.

\textbf{Looped Transformer Model.}
A Looped Transformer model is simply a multilayer Transformer with weight-sharing i.e., we define it as $\TFzl{\Zz[0]}= -{\Zz[L]}_{(d+1),(n+1)}$ where 
\begin{align}
    \Zz[t+1] = \Zz[t] - \tfrac{1}{n}P{Z^{(t)}} M \sigma({\Zz[t]}^\top Q{\Zz[t]}).\label{eq:tfftwo}
\end{align}


\textbf{Restricted Attention.}
Following~\cite{ahn2024transformers}, in addition to our lower bound in Theorem~\ref{cor:lowerbound} for the more general case when $\{P^{(t)}, Q^{(t)}\}_{t=0}^{L-1}$ can be arbitrary,  we also study a relevant special case when the last row and column of $Q^{(t)}$ are zero and only the last row of $P^{(t)}$ is non-zero i.e.,
\begin{align}
        \Qq[t] = \left[\begin{matrix}{}
            \Aa[t] & 0 \\
             0 & 0
        \end{matrix}\right], 
        \
        \Pl[t] = 
    \left[\begin{matrix}{}
         0_{d\times d} & 0\\
         {\dd[t]}^\top & 1
    \end{matrix}\right].\label{eq:tfthree}
\end{align}
This special case is important as it can implement preconditioned gradient descent, while it keeps the feature matrix $X$ the same going from $Z^{(t-1)}$ to $Z^{(t)}$.
In this case, the left upper $d\times n$ block of the second term in Equation~\ref{eq:tftwo} is always zero. This is because we assume the $d\times d$ zero block in $\Pl[t]$. We denote the output of the Transformer in this case by $\TF{\Zz[0]}$. We denote the first $n$ entries of the last row of $\Zz[t]$ by vector ${y^{(t)}}^\top$, and the last entry by $y^{(t)}_q$, i.e. $y^{(t)}_q = {\Zz[t]}_{(d+1)\times (n+1)}$. Unrolling Equation~\ref{eq:tfthree}, we obtain the following recursions for $y^{(t)}$ and $y^{(t)}_q$:
\begin{align*}
    &{y^{(t+1)}}^\top = {y^{(t)}}^\top - \frac{1}{n}({y^{(t)}}^\top + {u^{(t)}}^\top X)\sigma\Big(X^\top \Aa[t] X\Big),\\
    &y^{(t+1)}_q = y^{(t)}_q 
    - \frac{1}{n}({y^{(t)}}^\top + {\dd}^\top X) \sigma\Big(X^\top \Aa[t] x_{q}\Big).
\end{align*}
Note that the final output of the Transformer, $\TF{\Zz[0]}$, is equal to $y^{(L)}_q$.




\section{On the Role of Depth in Multilayer Transformers for Task Diversity}\label{sec:powerpitfalls}

In this section, we examine the power and limits of multilayer Transformers. We first present lower bounds for multilayer Transformers in both the general case and the restricted attention case (Section~\ref{sec:prelims}) and, then provide matching upper bounds. While this makes a compelling case for the expressivity power of multilayer Transformers, in Section~\ref{sec:pitfalls}, we show that multilayer Transformers can suffer from overfitting and perform very poorly under distribution shifts.

\subsection{Lower bound on the power of Multilayer Transformers}
\label{sec:lower_bound_multilayer}

In this section, we provide lower bound for multilayer Transformers in solving in-context learning. 

\begin{lemma}[Effect of scaling on accuracy - restricted attention]\label{thm:scalinglowerbound}
    For an  $L$-layer Transformer with architecture defined in Equation~\ref{eq:tftwo} and~\ref{eq:tfthree}, with arbitrary weights and positive homogeneous ReLU activation $\sigma$ or simply using linear attention ($\sigma(x) = x$), consider an arbitrary instance of a realizable linear regression $(X,w^*,y,x_q)$; 
    \begin{enumerate}[leftmargin=0.5cm]
        \item there exists a scaling $1 \leq \gamma \leq 36L^2$ such that the Transformer's response for the scaled instance $(\gamma X, w^*, \gamma y, x_q)$ will be inaccurate by constant:
    \begin{align}
   \frac{|\TF{\gamma\Zz[0]} - {w^*}^\top x_q|}{|{w^*}^\top x_q|} \geq \frac{1}{4},\label{eq:erroreq}
    \end{align}
    where $\Zz[0] = \left[ \Zzerodef \right]$ is the Transformer input corresponding to the instance $(X,w,y,x_q)$. Even more, there exists an interval $[a,b]$ inside $[1,36 L^2]$ with length at least constant (independent of $L$) such that for all $\gamma \in (a,b)$ we have Equation~\ref{eq:erroreq}.
    \item Furthermore, if we do not restrict the weight matrices $\{P^{(t)}, Q^{(t)}\}_{t=0}^{L-1}$ to have the form in Equation~\ref{eq:tfthree}, then there is a scaling $1 \leq \gamma \leq 2^L$ such that 
    \begin{align*}
        \frac{|\TFz{\gamma\Zz[0]} - {w^*}^\top x_q|}{|{w^*}^\top x_q|} \geq \frac{1}{4}.
    \end{align*}
    \end{enumerate}
\end{lemma}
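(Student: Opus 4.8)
The plan is to reduce both claims to an elementary fact about polynomials: a polynomial of degree $D$ that vanishes at $\gamma=0$ cannot stay within a small relative error of a fixed nonzero constant on an interval $[1,R]$ once $R$ is large compared with $D$. Write $c:={w^*}^\top x_q$, which is nonzero since the claim concerns the relative error, and let $p(\gamma)$ denote the Transformer's output on the scaled instance $(\gamma X,w^*,\gamma y,x_q)$, i.e.\ $p(\gamma)=\TF{\gamma\Zz[0]}$ in the restricted case and $p(\gamma)=\TFz{\gamma\Zz[0]}$ in the general case. Two observations drive everything. First, jointly scaling $X$ and $y$ by $\gamma$ leaves the regressor $w^*$ and the query $x_q$ --- hence $c$ --- unchanged, so $c$ stays the correct answer for every scaled instance. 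Second, for $\gamma\ge 0$ the positive homogeneity of $\sigma$ (or its linearity) makes $p$ a polynomial in $\gamma$ on $[0,\infty)$: genuinely so for linear attention in either setting, and for ReLU in the restricted setting, because there $X$ is frozen across layers so every activation argument is a \emph{fixed} matrix times $\gamma^2$ or $\gamma$, whose sign pattern never depends on $\gamma$; for ReLU in the general setting $p$ is only piecewise polynomial and one works on a single linear region (see the last paragraph). Moreover $p(0)=0$: at $\gamma=0$ the scaled data and labels vanish, and a short induction on the layer update shows the $(d+1,n+1)$ entry of $\Zz[t]$ stays $0$, so the output at $\gamma=0$ equals $0\neq c$.

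Next I would control the degree of $p$. In the restricted case, the block form of $\Pl[t]$ (zero $d\times d$ block) keeps the feature matrix equal to $X$ at every layer, so after substituting $X\mapsto\gamma X$ and the initial labels $\mapsto\gamma y$ the given recursions for $y^{(t)},y^{(t)}_q$ become polynomial identities in $\gamma$ in which each layer contributes a single fresh factor $\gamma^2\sigma(X^\top\Aa[t]X)$ (a degree-$2$ monomial in $\gamma$ times a fixed matrix); tracking degrees gives $\deg y^{(t)}\le 2t+1$, $\deg y^{(t)}_q\le 2t$, hence $\deg p\le 2L$. In the general case the layer map $Z\mapsto Z-\tfrac1n\Pl[t]ZM\sigma(Z^\top\Qq[t]Z)$ is cubic in $Z$, so the maximal degree at most triples per layer; starting from degree $1$ this yields $\deg p\le 3^L$ --- an exponential but finite bound, which is all the argument needs (the range $\gamma\le 2^L$ in the statement is of this exponential flavor).

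The heart of the argument is the Chebyshev extremal inequality. Suppose, for contradiction, that the relative error is $<\tfrac14$ on the whole interval $I=[1,R]$ of interest; then $g:=p-c$ satisfies $|g|<|c|/4$ on $I$ while $g(0)=-c$. Among degree-$D$ polynomials bounded by $\mu$ on $I$, the Chebyshev polynomial $T_D$ rescaled to $I$ attains the largest value at any point outside $I$; since $\gamma=0$ lies at distance $1$ to the left of $I$,
\[
|c|=|g(0)|\ \le\ \frac{|c|}{4}\,T_D\!\Big(\tfrac{R+1}{R-1}\Big)
  \ =\ \frac{|c|}{4}\,\cosh\!\Big(D\cdot\operatorname{arccosh}\tfrac{R+1}{R-1}\Big)
  \ \le\ \frac{|c|}{4}\,\cosh\!\Big(\tfrac{2D}{\sqrt{R-1}}\Big),
\]
using $\operatorname{arccosh}(1+\epsilon)\le\sqrt{2\epsilon}$ with $\epsilon=\tfrac{2}{R-1}$. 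Choosing $R$ of order $D^2$ makes the right-hand side strictly below $|c|$, a contradiction. Concretely, in the restricted case $D\le 2L$ and $R=36L^2$ give $\tfrac{2D}{\sqrt{R-1}}=\tfrac{4L}{\sqrt{36L^2-1}}\le\tfrac{4}{\sqrt{35}}$, hence $T_{2L}(\tfrac{R+1}{R-1})\le\cosh(4/\sqrt{35})<\tfrac43<4$; this already produces some $\gamma\in[1,36L^2]$ with relative error $\ge\tfrac14$. The general case is the same after replacing $2L$ by the exponential degree bound and $36L^2$ by the correspondingly larger range.

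For the refinement that a whole sub-interval $[a,b]\subseteq[1,36L^2]$ of length independent of $L$ is bad, note the display above actually gives $M:=\max_{\gamma\in[1,36L^2]}|p(\gamma)-c|\ge|c|/\cosh(4/\sqrt{35})\ge\tfrac34|c|$. Let $\gamma_0$ attain $M$. Markov's inequality for polynomials on an interval bounds $\|p'\|_{[1,36L^2]}\le\tfrac{2(2L)^2}{36L^2-1}\|p\|_{[1,36L^2]}\le\tfrac14(M+|c|)$, so $p$ changes by at most $\tfrac14(M+|c|)$ over any length-$1$ sub-interval; on the length-$1$ sub-interval of $[1,36L^2]$ having $\gamma_0$ as an endpoint we get $|p-c|\ge M-\tfrac14(M+|c|)=\tfrac34M-\tfrac14|c|\ge\tfrac{5}{16}|c|>\tfrac14|c|$, which supplies $[a,b]$. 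I expect the conceptual content to be exactly these three ingredients (homogeneity $\Rightarrow$ polynomial structure, vanishing at $\gamma=0$, and the Chebyshev/Markov extremal bounds); the main obstacle is the bookkeeping --- getting the restricted-case degree to be \emph{exactly} $\le 2L$ by carefully using that $X$ is frozen, matching the numerical constants to the claimed $36L^2$, $\tfrac14$ and the constant-length window, and, for ReLU in the general case, justifying that one may restrict to a single linear region of macroscopic length so that the polynomial argument applies there (using linear attention, which the hypothesis allows, avoids this).
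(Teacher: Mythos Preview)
Your proposal is correct and follows the same high–level skeleton as the paper (positive homogeneity $\Rightarrow$ polynomial in $\gamma$ of controlled degree with $p(0)=0$, then a Chebyshev-type obstruction), but the core polynomial lemma is argued differently. The paper proves its Lemma~\ref{lem:polylowerbound} by a \emph{root-counting} argument: it normalizes to $\tilde P(\gamma)=1-p(\gamma)/c$ with $\tilde P(0)=1$, and compares $\tilde P$ to the rescaled Chebyshev polynomial $R(\gamma)=Q_k(\gamma)/Q_k(0)$, which oscillates $k$ times between $\pm\tfrac14$ on the interval; if $|\tilde P|<\tfrac14$ everywhere, then $\tilde P-R$ has degree $\le k$, vanishes at $0$, yet changes sign $k$ times on the interval, giving $k+1$ roots, a contradiction. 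For the interval refinement the paper localizes near the extremal points of $T_k$, using $T_k(\cos\theta)=\cos(k\theta)$ to carve out sub-intervals where $|R|\ge\tfrac18$. You instead invoke the Chebyshev \emph{extrapolation} inequality directly to bound $|g(0)|$ by $\tfrac{|c|}{4}\,T_D\!\big(\tfrac{R+1}{R-1}\big)$ and check this is $<|c|$, then use Markov's inequality on $p'$ to propagate the bad point $\gamma_0$ to a length-$1$ window. Both routes are standard; yours is a bit more streamlined and makes the numerology ($36L^2$ vs.\ degree $2L$) transparent, while the paper's oscillation argument is more constructive about \emph{where} the bad $\gamma$'s lie. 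One small sharpening: in your Markov step you may apply the inequality to $p-c$ rather than $p$ (same derivative), getting $\|p'\|\le\tfrac{2(2L)^2}{36L^2-1}M<\tfrac14 M$ and hence $|p-c|\ge\tfrac34 M$ on the whole length-$1$ window, which is cleaner than the $\tfrac{5}{16}|c|$ bound. Finally, you are right to flag the unrestricted ReLU case: there the entries of $Z^{(t)}$ for $t\ge 2$ are genuine polynomials in $\gamma$ whose signs may flip, so $\sigma$ makes the output only piecewise polynomial; the paper's one-line appeal to ``one-homogeneity'' glosses over this, and your remark that linear attention (which the hypothesis permits) sidesteps the issue is the honest resolution.
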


\begin{theorem}[Lower bound on the representation power of Transformers]\label{cor:lowerbound}
    For the Transformer architecture with ReLU or linear activation defined in Equation~\ref{eq:tfone}, under the weight restriction Equation~\ref{eq:tfthree},
    consider the set $\mathcal S_{\alpha, \beta}$ of $(\alpha, \beta)$-normal realizable instances of linear regression $(X,w,y, x_q)$ where $\alpha I \preccurlyeq X^\top X \preccurlyeq \beta I$. 
    Then, given $L \leq \sqrt{\beta/\alpha}$ and for any choice of $\en{\Aa[t], \dd[t]}_{t=1}^L$ and query vector $x_q$, there exists a normal instance $(X,w^*, y, x_q)\in \mathcal S_{\alpha, \beta}$ with that query vector, such that
    \begin{align}
        \frac{|\TF{\gamma\Zz[0]} - w^\top x_q|}{|w^\top x_q|} \geq \frac{1}{2}.\label{eq:largeerror}
    \end{align}
    Furthermore, under the unrestricted attention (Equation~\ref{eqn:attn_def}), given $L = O\pa{\log\pa{\beta/\alpha}}$ for small enough constant, we incur at least constant error for some instance $(X,w^*, y, x_q)\in \mathcal S_{\alpha, \beta}$ due to Equation~\ref{eq:largeerror}.
\end{theorem}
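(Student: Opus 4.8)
\textbf{Proof plan for Theorem~\ref{cor:lowerbound}.}
The plan is to derive the theorem as a short corollary of the scaling lemma (Lemma~\ref{thm:scalinglowerbound}); the only extra ingredient is to exhibit one realizable instance carrying the prescribed query vector $x_q$ for which a ``bad'' rescaling guaranteed by Lemma~\ref{thm:scalinglowerbound} still lands inside the admissible window $\mathcal S_{\alpha,\beta}$.

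Fix the weights ($\en{\Aa[t],\dd[t]}$ in the restricted case, or arbitrary $P^{(t)},Q^{(t)}$ in the unrestricted case) and the query $x_q\ne 0$. First I would construct a base instance anchored at the bottom of the eigenvalue window: choose $X_0\in\mathbb R^{d\times n}$ with covariance $X_0 X_0^\top=\alpha I$ (possible whenever $n\ge d$; otherwise one restricts to the $d$-dimensional range of the data), choose any $w^*$ with $w^{*\top}x_q\ne 0$, and set $y_0=X_0^\top w^*$, so that $(X_0,w^*,y_0,x_q)$ is realizable with covariance exactly $\alpha I$. Anchoring at $\alpha I$ matters because the rescalings produced by Lemma~\ref{thm:scalinglowerbound} only inflate the covariance, so there is room for it to grow up to $\beta I$; moreover rescaling leaves the target untouched, since the pseudo-inverse of $(\gamma X_0,w^*,\gamma y_0,x_q)$ still returns $w^*$ and the true label is still $w^{*\top}x_q$, so a relative-error bound on the Transformer output carries over verbatim across scales.

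Next I would invoke Lemma~\ref{thm:scalinglowerbound} on $(X_0,w^*,y_0,x_q)$. In the restricted case it yields a whole interval of admissible rescalings, of length at least an absolute constant, each forcing relative error $\ge\tfrac14$; in the unrestricted case it yields a single rescaling (bounded by $2^L$) with the same conclusion. Such a rescaling multiplies the covariance $X_0 X_0^\top=\alpha I$ by a fixed factor that ranges over a set whose supremum grows only polynomially with $L$ in the restricted case and exponentially (like $2^{O(L)}$) without the restriction. The hypotheses $L\le\sqrt{\beta/\alpha}$ (restricted) and $L=O(\log(\beta/\alpha))$ with a small enough constant (unrestricted) are exactly what is needed for this inflation factor to be at most $\beta/\alpha$, so that the rescaled instance has covariance between $\alpha I$ and $\beta I$ and hence lies in $\mathcal S_{\alpha,\beta}$; this already produces the bad instance, with relative error $\ge\tfrac14$. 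To sharpen the constant to $\tfrac12$ in the restricted statement (Equation~\ref{eq:largeerror}) I would re-run the polynomial mechanism behind Lemma~\ref{thm:scalinglowerbound} directly on the full admissible interval of scales: the Transformer's output is a polynomial of degree $O(L)$ in the scale that vanishes at scale $0$, and over a long enough interval (which is precisely the regime $L\le\sqrt{\beta/\alpha}$) a Chebyshev/Markov-type estimate shows it cannot stay within relative error $\tfrac12$ of the nonzero constant $w^{*\top}x_q$. The unrestricted half of the theorem only asserts ``constant error'', so the $\tfrac14$ bound above already suffices there.

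I expect the main obstacle to be the quantitative bookkeeping in the previous step: correctly translating the range of admissible rescalings from Lemma~\ref{thm:scalinglowerbound} into an inflation factor on the covariance and verifying it stays at most $\beta/\alpha$ under the stated depth bound. This is also where the qualitative dichotomy lives --- the reachable scale range grows only polynomially in $L$ under the restriction of Equation~\ref{eq:tfthree} but exponentially without it --- so the sharp thresholds $\Theta(\sqrt{\beta/\alpha})$ versus $\Theta(\log(\beta/\alpha))$ are inherited directly from Lemma~\ref{thm:scalinglowerbound}. The residual $\tfrac14\to\tfrac12$ improvement is then routine once the degree-$O(L)$ polynomial picture is in place.
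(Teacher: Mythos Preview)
Your plan matches the paper's intended route: the result is labeled ``cor:lowerbound'' and is meant to follow from Lemma~\ref{thm:scalinglowerbound} (and ultimately Lemma~\ref{lem:polylowerbound}) by anchoring a base instance at covariance $\alpha I$ and picking the bad rescaling, exactly as you outline; the paper gives no separate proof.

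One caveat on the bookkeeping you rightly flag as the main obstacle. Scaling $X$ by $\gamma$ inflates $XX^\top$ by $\gamma^2$, so if you use Lemma~\ref{thm:scalinglowerbound} as a black box (which only promises $\gamma\le 36L^2$) the scaled instance stays in $\mathcal S_{\alpha,\beta}$ only when $L^4=O(\beta/\alpha)$, i.e.\ $L=O\big((\beta/\alpha)^{1/4}\big)$, weaker than the stated $L\le\sqrt{\beta/\alpha}$. To hit the $\sqrt{\beta/\alpha}$ threshold you need one further observation: a parity check on the recursions shows that $y^{(t)}$ contains only odd powers of $\gamma$ and $y^{(t)}_q$ only even powers (both the $y^{(t)}$ and the $\gamma\,{u^{(t)}}^\top U$ contributions are odd, and each layer multiplies by $\gamma$ or $\gamma^2$), so the output is in fact a polynomial of degree $L$ in the eigenvalue $\mu=\gamma^2$. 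Applying Lemma~\ref{lem:polylowerbound} directly in the variable $\mu\in[\alpha,\beta]$ with $k=L$ then gives the desired cutoff of order $\sqrt{\beta/\alpha}$. Your instinct to ``re-run the polynomial mechanism on the full admissible interval'' is exactly the right move, but this even-degree reduction is what fixes the exponent, not merely the $\tfrac14\to\tfrac12$ constant.
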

\looseness-1Theorem~\ref{cor:lowerbound} asserts that a minimum depth is required for multilayer Transformer to be able to solve all instances of linear regression whose covariance matrix is in the range $[\alpha, \beta]$. In particular, this lower bound depends on $\tfrac{\beta}{\alpha}$, which increases with the diversity of the instances. For the restricted attention case, the lower bound is $\Omega(\sqrt{\beta/\alpha})$ whereas for the general case the lower bound is $\Omega(\log(\beta/\alpha))$.
\begin{remark}
\citep{fu2023transformers} show that, given full freedom in the weights of Transformers, one can implement one step of the iterative Newton method with an attention head. Hence, multilayer Transformers with depth at least $\Omega(\ln\pa{\beta/\alpha})$ are able to accurately solve all instances in $S_{\alpha, \beta}$. Our result essentially shows a matching lower bound for this case in Theorem~\ref{cor:lowerbound}. 
Interestingly, their construction uses the same set of weights for all of the attention layers, therefore, their network is indeed a Looped Transformer.
This shows that looped models can also match the lower bound in terms of numbers of layers required.
We will revisit this in Section~\ref{sec:expressivity_looped}.
\end{remark}

\subsection{Matching Upper bound for Multilayer Transformer}\label{sec:upperboundmultilayer}
We show the existence of weights for a linear Transformer that can solve a linear regression instance with bounded condition number using restricted attention defined in Equation~\ref{eq:tfthree}. 

\begin{theorem}\label{thm:matchingupper}
    For $\sigma(x) = x$, there exists a set of weights $\{\Aa[t], \dd[t]\}_{t=0}^{L-1}$ for linear Transformer in the restricted case defined in~\eqref{eq:tfthree} with depth at most $L = O(\ln(1/\epsilon)\sqrt{\beta/\alpha})$ such that for every linear regression instance $(X,w^*,y,x_q)\in \mathcal S_{\alpha, \beta}$  
    we have
    \begin{align*}
        |\TF{\Zz[0]} - {w^*}^\top x_q| \leq \epsilon\|w^*\|\|x_q\|.
    \end{align*}
\end{theorem}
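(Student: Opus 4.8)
The plan is to realize the Chebyshev iteration for least squares inside a linear Transformer of the restricted form (Equation~\ref{eq:tfthree}). Take all weight matrices to be scalar multiples of the identity and kill the $\dd[t]$ term: set $\Aa[t] = \eta_t I_{d\times d}$ for positive scalars $\eta_0,\dots,\eta_{L-1}$ to be chosen, and $\dd[t]=0$, so that $u^{(t)}\equiv 0$. With $\sigma(x)=x$ and $\dd[t]=0$, the last-row recursions from Section~\ref{subsec:modeling} collapse to
\[
{y^{(t+1)}}^\top = {y^{(t)}}^\top\Bigl(I - \tfrac{\eta_t}{n}X^\top X\Bigr),\qquad y^{(t+1)}_q = y^{(t)}_q - \tfrac{\eta_t}{n}\,{y^{(t)}}^\top X^\top x_q,
\]
started from $y^{(0)}=y$, $y^{(0)}_q=0$.

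The object to track is $v^{(t)} := \tfrac1n X y^{(t)}\in\mathbb R^d$. Transposing the $y^{(t)}$-recursion and left-multiplying by $\tfrac1n X$ gives $v^{(t+1)}=(I-\eta_t M)v^{(t)}$ with $M:=\tfrac1n\Sigma=\tfrac1n XX^\top$; all these matrices commute, being polynomials in $\Sigma$. By realizability $y = X^\top w^*$, so $v^{(0)}=\tfrac1n XX^\top w^* = M w^*$, and a one-line induction yields $v^{(t)}=\prod_{s=0}^{t-1}(I-\eta_s M)\,M w^*$. (Writing $y^{(t)}=y-X^\top w^{(t)}$ identifies $w^{(t)}$ as the gradient-descent iterate for $\tfrac{1}{2n}\|X^\top w-y\|^2$ with step sizes $\eta_t$, so $q_L$ below is its residual polynomial.) Unrolling the $y^{(t)}_q$-recursion gives $y^{(L)}_q=-\sum_{t=0}^{L-1}\eta_t\,{v^{(t)}}^\top x_q$, and the telescoping identity $\sum_{t=0}^{L-1}\eta_t M\prod_{s<t}(I-\eta_s M)=I-\prod_{s=0}^{L-1}(I-\eta_s M)$, together with symmetry of $M$, yields
\[
\TF{\Zz[0]} = -y^{(L)}_q = {w^*}^\top x_q - {w^*}^\top q_L(M)\,x_q,\qquad q_L(z):=\prod_{s=0}^{L-1}(1-\eta_s z).
\]
Hence the error is at most $\|w^*\|\,\|x_q\|\cdot\max_{\lambda\in\mathrm{spec}(M)}|q_L(\lambda)|$, and $\mathrm{spec}(M)\subseteq[\alpha/n,\beta/n]$ since $\alpha I\preccurlyeq XX^\top\preccurlyeq\beta I$ on $\mathcal S_{\alpha,\beta}$.

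It remains to choose the step sizes. The polynomial $q_L$ has degree $L$ and satisfies $q_L(0)=1$, so I want the degree-$L$ polynomial normalized to value $1$ at $0$ with smallest sup-norm on $[\alpha/n,\beta/n]$: the classical minimax problem solved by the appropriately shifted and rescaled Chebyshev polynomial. Its roots are real, simple, and lie in $(\alpha/n,\beta/n)$, hence positive, so setting each $\eta_s$ equal to the reciprocal of a root makes $q_L$ exactly this Chebyshev polynomial with all $\eta_s>0$ well-defined. The standard estimate bounds its sup-norm by $2\bigl(\tfrac{\sqrt{\kappa}-1}{\sqrt{\kappa}+1}\bigr)^L$ where $\kappa=\beta/\alpha$ (the factor $1/n$ cancels, the bound depending only on the ratio $\beta/\alpha$); since $\ln\tfrac{\sqrt{\kappa}+1}{\sqrt{\kappa}-1}\ge 1/\sqrt{\kappa}$, taking $L=O\bigl(\sqrt{\beta/\alpha}\,\ln(1/\epsilon)\bigr)$ drives it below $\epsilon$, which gives $|\TF{\Zz[0]}-{w^*}^\top x_q|\le\epsilon\|w^*\|\|x_q\|$.

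No individual step is deep; I expect the bookkeeping to be the part that needs care — keeping the $1/n$ factors, the transposes, and the sign of the output consistent so that the telescoping sum cleanly produces ${w^*}^\top x_q$ minus a polynomial error (the minus sign in Equation~\ref{eq:tfone} is exactly what makes this happen), and checking that the minimax polynomial genuinely factors into real linear factors $1-\eta_s z$ with $\eta_s>0$, which is what makes it implementable by Equation~\ref{eq:tfthree}. The Chebyshev sup-norm estimate itself is classical — the same computation underlying worst-case bounds for Chebyshev iteration and conjugate gradients — so I would simply invoke it. This construction also shows the upper bound matches the $\Omega(\sqrt{\beta/\alpha})$ lower bound of Theorem~\ref{cor:lowerbound} up to the $\ln(1/\epsilon)$ factor.
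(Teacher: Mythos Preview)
Your proposal is correct and follows essentially the same route as the paper: set $\dd[t]=0$ and $\Aa[t]=\eta_t I$, reduce the error to $|{w^*}^\top q_L(\cdot)\,x_q|$ for the degree-$L$ polynomial $q_L(z)=\prod_s(1-\eta_s z)$ normalized by $q_L(0)=1$, and then choose the $\eta_s$ as reciprocals of the shifted Chebyshev roots to invoke the classical $2\bigl(\tfrac{\sqrt\kappa-1}{\sqrt\kappa+1}\bigr)^L$ bound. Your derivation is in fact a bit more careful than the paper's in tracking the $1/n$ factor and in noting that the Chebyshev roots lie in $(\alpha/n,\beta/n)$ so that each $\eta_s>0$ is well-defined; the paper simply cites its Lemma~\ref{lem:matrixformat} for the product formula and then plugs in Chebyshev.
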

Notably, the dependency $\sqrt{\beta/\alpha}$ in Theorem~\ref{thm:matchingupper} matches the lower bound that we show for the same restricted Transformer model in Theorem~\ref{cor:lowerbound}. The square root dependency comes from the fact that one can implement an iterative algorithm for solving a linear regression instance using the properties of Chebyshev polynomials. The proof of Theorem~\ref{thm:matchingupper} can be found in Section~\ref{sec:upperboundproof}.

\subsection{Adaptive Depth}\label{sec:adaptive}
In this section, we investigate adaptive strategies that can be used for terminating a multilayer Transformer earlier, before passing the input through all the $L$ layers. Surprisingly, our result below, shows convergence of the output of the Transformer if we adaptively terminate based on $\|y^{(\ell)}\|$.

\begin{theorem}[Termination guarantee]\label{thm:termination}
    For a linear Transformer architecture as defined in Equation~\ref{eq:tftwo} with $\sigma(x) = x$, given $\dd[i] = 0$, suppose we wait until $\|y^{(\ell)}\| \leq \frac{\epsilon}{\|x_q\|_{XX^\top}}$. Then, the output of the Transformer at layer $\ell$ is close to the true label: 
    \begin{align*}
        |y_q - \TFl{\Zz[0]}| \leq \epsilon\|x_q\|_{XX^\top}.
    \end{align*}
\end{theorem}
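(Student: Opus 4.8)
The plan is to collapse the whole layer-recursion into a single closed form for the layer-$\ell$ residual and then read off the bound by Cauchy--Schwarz. First I would reuse the matrix identity already exploited in \Secref{sec:upperboundmultilayer} (Lemma~\ref{lem:matrixformat}): since $\sigma$ is the identity and $\dd[i]=0$, writing $M_\ell \triangleq \prod_{i=0}^{\ell-1}\bigl(I-\tfrac1n \Sigma \Aa[i]\bigr)$ with $\Sigma=XX^\top$ and using realizability $X^\top w^*=y$, the recursion for ${y^{(t)}}^\top$ gives ${y^{(\ell)}}^\top={w^*}^\top M_\ell X$. For the scalar output, I would telescope the $y^{(t)}_q$ recursion using the elementary identity $\tfrac1n M_t\Sigma\Aa[t]=M_t-M_{t+1}$: from $y^{(t+1)}_q=y^{(t)}_q-{w^*}^\top(M_t-M_{t+1})x_q$ and $y^{(0)}_q=0$ one gets $\TFl{\Zz[0]}=-y^{(\ell)}_q={w^*}^\top x_q-{w^*}^\top M_\ell x_q$, hence the query error is exactly
\[
 y_q-\TFl{\Zz[0]}={w^*}^\top M_\ell x_q .
\]

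The second step is the crux, and the reason an adaptive rule that only watches $\|y^{(\ell)}\|$ can certify accuracy at all: the operator $M_\ell$ and the regressor $w^*$ never appear in the model's state, yet the combination ${w^*}^\top M_\ell$ is pinned down by the \emph{visible} residual $y^{(\ell)}$. Concretely, ${w^*}^\top M_\ell$ is a left-solution of $(\,\cdot\,)X={y^{(\ell)}}^\top$, and for a well-conditioned instance ($XX^\top\succ 0$, so $X^\top$ is injective) this solution is unique, namely ${w^*}^\top M_\ell={y^{(\ell)}}^\top X^\top(XX^\top)^{-1}$. Substituting into the error formula,
\[
 y_q-\TFl{\Zz[0]}={y^{(\ell)}}^\top X^\top(XX^\top)^{-1}x_q .
\]

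Finally I would apply Cauchy--Schwarz and evaluate the resulting quadratic form: $\bigl|{y^{(\ell)}}^\top X^\top(XX^\top)^{-1}x_q\bigr|\le \|y^{(\ell)}\|\,\bigl\|X^\top(XX^\top)^{-1}x_q\bigr\|$, and $\bigl\|X^\top(XX^\top)^{-1}x_q\bigr\|^2=x_q^\top(XX^\top)^{-1}x_q=\|x_q\|_{XX^\top}^2$, so that $|y_q-\TFl{\Zz[0]}|\le \|y^{(\ell)}\|\cdot\|x_q\|_{XX^\top}$; plugging in the termination condition $\|y^{(\ell)}\|\le \epsilon/\|x_q\|_{XX^\top}$ then yields the claimed bound. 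I expect essentially all the content to be in the middle step — recognizing that the hidden contraction $M_\ell$ applied to $w^*$ is exactly recoverable from $y^{(\ell)}$ via the (pseudo)inverse of $X$; the first step is a routine induction/telescope and the last is a one-line norm computation. The only mild technicality to dispatch is the rank-deficient case $XX^\top\not\succ0$, handled by replacing $(XX^\top)^{-1}$ with the Moore--Penrose pseudoinverse and noting that only the component of $w^*$ in the row space of $X$ affects both $y$ and the prediction.
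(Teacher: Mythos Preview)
Your approach is essentially the paper's: both derive the closed forms ${y^{(\ell)}}^\top={w^*}^\top M_\ell X$ and $y_q-\TFl{\Zz[0]}={w^*}^\top M_\ell x_q$ from Lemma~\ref{lem:matrixformat}, and both finish with Cauchy--Schwarz. The paper applies the weighted form $|a^\top b|\le\|a\|_{\Sigma}\,\|b\|_{\Sigma^{-1}}$ directly to $(M_\ell^\top w^*,x_q)$ after observing $\|y^{(\ell)}\|=\|M_\ell^\top w^*\|_{XX^\top}$; your pseudoinverse rewrite ${w^*}^\top M_\ell={y^{(\ell)}}^\top X^\top(XX^\top)^{-1}$ followed by ordinary Cauchy--Schwarz is an algebraically equivalent repackaging of the same step.

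One slip to fix: your last norm identity is wrong. You correctly compute $\|X^\top(XX^\top)^{-1}x_q\|^2=x_q^\top(XX^\top)^{-1}x_q$, but this equals $\|x_q\|_{(XX^\top)^{-1}}^2$, not $\|x_q\|_{XX^\top}^2$. The paper's own proof in fact writes $\|x_q\|_{(XX^\top)^{-1}}$ at this point, so the appearance of $\|x_q\|_{XX^\top}$ in the theorem statement is a typo you have propagated; the bound you (and the paper) actually establish is $|y_q-\TFl{\Zz[0]}|\le\|y^{(\ell)}\|\cdot\|x_q\|_{(XX^\top)^{-1}}$.
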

\looseness-1This result further highlights the computational benefits by terminating in an adaptive manner.

\section{On the Power \& Robustness of Looped Transformers versus Multi-layer Transformers}

\looseness-1In the previous section, we studied expressivity of multilayer Transformers for task-diverse linear regression. 
In \Cref{sec:expressivity_looped}, we discuss how even though Looped Transformers are more restricted compared to multilayer Transformers, their expressive power is just as good for in-context linear regression.
While expressivity is important for task-diversity of our model, we also want the model to be robust to distribution shifts from the training distribution.
In~\Cref{sec:outofdist}, we show that multilayer Transformers are not very robust to even very tiny perturbations to the training distribution. In contrast, Looped Transformers are provably more robust.
Finally, motivated by the fact that early stopping is a desirable property for neural nets, in~\Cref{sec:nonmonotone} we study the monotonicity behavior of multilayer Transformers with respect to depth and interestingly find that they cannot behave monotonic with respect to all covariance matrices unless the weights of different layers are equal.

\subsection{Expressivity of Looped Transformers}\label{sec:expressivity_looped}

Looped Transformers are a subclass of  multilayer Transformers that uses weight-sharing, and thus, cannot exceed multilayer Transformers in representation power. Despite this restriction, we observe that their ability in handling task-diversity almost remains unchanged. In particular, according to the construction in~\citep{fu2023transformers}, there exists a Looped transformer in addition to constant number of attention layers, that can achieve small error for all instances in $S_{\alpha, \beta}$ after $\ln(\beta/\alpha)$ number of loops, matching the lower bound in~\Cref{cor:lowerbound}.

\begin{theorem}[Restatement of Theorem 5.1 in~\cite{fu2023transformers}]
    There exists a Looped Transformer with an additional eight attention layers that implements the Newton  algorithm i.e., for instance $(X,w^*, y, x_q)$, looping $L$ times results in output $\hat x_q^\top w_{L}^{(\text{Newton})}$ where $w_{L}^{(\text{Newton})} \triangleq M_L X y$ where $M_j$ is updated as
        $M_j = 2M_{j-1}\Sigma M_{j-1}, j \in [1, k], M_0 = \alpha \Sigma,$
    where $\Sigma = XX^\top$.
\end{theorem}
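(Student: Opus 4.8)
Realizability gives $w^{*}=(XX^{\top})^{-1}Xy=\Sigma^{-1}Xy$, so the target prediction $x_q^{\top}w^{*}$ equals $x_q^{\top}\Sigma^{-1}Xy$, and it suffices to exhibit weights for which the network's scalar output is exactly $x_q^{\top}M_LXy$ for the stated iterates $M_j$; the convergence $M_j\to\Sigma^{-1}$ is then a separate, classical fact. My plan is to make the looped part responsible for exactly one Newton step per iteration --- mapping a stored matrix $M_{j-1}$ to $2M_{j-1}-M_{j-1}\Sigma M_{j-1}$ --- and to spend the eight extra, non-looped attention layers on \emph{initialization} (forming $\Sigma=XX^{\top}$, the vector $Xy$, and $M_0=\alpha\Sigma$ in reserved coordinates of the token stream) and on \emph{read-out} (contracting the final $M_L$ against $Xy$ and then against $x_q$, so that the output token of the last layer carries $x_q^{\top}M_LXy$).

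The core of the argument is an explicit block construction for the loop body. I would enlarge $\Zz[t]$ with a bounded number of extra rows and columns that, besides the data $(X,y,x_q)$, hold the fixed matrix $\Sigma$, the fixed vector $Xy$, and the running iterate $M_t$. The key structural fact is that for linear $\sigma$ a single attention head contributes the trilinear term $P\,\Zz[t]\,M\,(\Zz[t]^{\top}Q\,\Zz[t])$; choosing $P$ and $Q$ to be $0/1$ block selectors turns this into the product $M_{t-1}\Sigma M_{t-1}$ (with $\Sigma$ read off its own block), i.e.\ the quadratic-in-$M_{t-1}$ part of the Newton step, while the linear part $2M_{t-1}$ is supplied by the residual connection together with a structural identity block, and the other blocks are left untouched. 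Since the same $P,Q$ act at every iteration, the block is weight-shared, exactly as in the looped architecture \eqref{eq:tfftwo}. An induction on $t$ then shows that after $t$ loops the reserved block equals $M_t$ exactly, so the read-out layers emit $x_q^{\top}M_LXy=x_q^{\top}w_L^{(\text{Newton})}$, the claimed exact simulation.

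For the quantitative consequence (and the link to \Cref{cor:lowerbound}) I would record that with $E_t\triangleq I-M_t\Sigma$ the update gives $E_t=E_{t-1}^{2}$, so choosing $M_0$ as an appropriately scaled multiple of $\Sigma$ yields $\|E_0\|<1$ with $1-\|E_0\|=\mathrm{poly}(\alpha/\beta)$ (using $\alpha I\preccurlyeq\Sigma\preccurlyeq\beta I$); hence $\|I-M_L\Sigma\|\le\|E_0\|^{2^{L}}$ and $L=O(\log(\beta/\alpha)+\log\log(1/\epsilon))$ loops force $|x_q^{\top}M_LXy-x_q^{\top}w^{*}|\le\epsilon\|w^{*}\|\|x_q\|$, which for constant $\epsilon$ is $O(\log(\beta/\alpha))$ and matches the lower bound. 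I expect the only real difficulty to be the bookkeeping of the middle paragraph: placing the reserved blocks so that one head performs exactly one matrix product without disturbing the data or the other stored matrices, and correctly absorbing the mask $M$ (which restricts the sum to the $n$ context columns), the $\tfrac{1}{n}$ normalization in the update rule, and the fact that one linear-attention head is a homogeneous degree-three form in $\Zz[t]$ --- which is precisely why the linear term of the Newton step must be manufactured from the residual stream and an identity block rather than from the head itself. This construction is carried out in \citet{fu2023transformers}; the remaining work is to transcribe it into the parametrization of \eqref{eq:tfftwo} and verify that eight auxiliary layers suffice.
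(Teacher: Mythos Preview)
The paper does not prove this theorem at all: it is stated as a \emph{restatement} of Theorem~5.1 in \citet{fu2023transformers} and is invoked as a black box, with no argument given beyond the citation. So there is no ``paper's own proof'' to compare your proposal against; you are effectively sketching the construction from the cited reference, and you acknowledge as much in your last sentence.

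That said, your sketch is a faithful outline of the Newton--Schulz construction: enlarging the token stream with reserved blocks for $\Sigma$, $Xy$, and the running iterate $M_t$; using a single linear-attention head (which is trilinear in $\Zz[t]$) to produce the product $M_{t-1}\Sigma M_{t-1}$; and reading off $x_q^{\top}M_LXy$ with a few non-looped layers. The error analysis via $E_t=I-M_t\Sigma$ and $E_t=E_{t-1}^{2}$ is the standard one and gives the $O(\log(\beta/\alpha))$ depth matching \Cref{cor:lowerbound}. Two small points worth flagging. First, the iteration as printed in the theorem statement, $M_j=2M_{j-1}\Sigma M_{j-1}$, is not the Newton--Schulz update and does not converge to $\Sigma^{-1}$; you have (correctly) worked with $M_j=2M_{j-1}-M_{j-1}\Sigma M_{j-1}$, so the statement here almost certainly contains a typo inherited from transcription. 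Second, the residual connection in \eqref{eq:tfftwo} contributes $M_{t-1}$, not $2M_{t-1}$, so the ``linear part'' of the step has to be manufactured by an extra head or an auxiliary identity block rather than by the residual alone; you gesture at this but it is exactly the kind of bookkeeping that consumes several of the eight auxiliary layers, and a complete proof would need to pin down that count.
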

 
 Furthermore, in the restricted attention case as defined in Equation~\ref{eq:tfthree}, with the implementation of gradient descent by~\cite{von2023transformers} which also uses weight sharing, it is easy to check that Looped Transformers can still achieve small error on all instances in $S_{\alpha, \beta}$ with depth $O(\beta/\alpha)$ (see Section~\ref{sec:remainingproofs}); in this case, we see a gap with the $\Omega(\sqrt{\beta/\alpha})$ lower bound in Theorem~\ref{cor:lowerbound}.

 \begin{theorem}[Follows from Proposition 1 in~\cite{von2023transformers}]\label{thm:gdrestricted}
    There exists a Looped Transformer architecture in the restricted attention case which for a linear regression instance $S = (X, w^*, y, x_q) \in \mathcal S_{\alpha, \beta}$ can achieve accuracy $|\TFdd{\Zz[0]} - y| \leq \epsilon$ after $O(\ln(1/\epsilon)\beta/\alpha)$ number of loops.
\end{theorem}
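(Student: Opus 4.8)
The plan is to realize the loop as vanilla gradient descent with a fixed step size on the in-context least-squares objective, and then to read off the $O(\beta/\alpha)$ iteration complexity from the classical linear convergence rate of gradient descent on a strongly convex quadratic whose condition number is $\beta/\alpha$. This is exactly the ``looped reading'' of the gradient-descent construction of Proposition~1 in~\cite{von2023transformers}.

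First I would take, in the restricted form of Equation~\ref{eq:tfthree}, the shared parameters $\dd[t] = 0$ (as is forced by the $\TFdd{\cdot}$ notation) and $\Aa[t] = A \triangleq \eta I$ for a scalar step size $\eta$; this is a legitimate restricted-attention weight, with $\Qq[t] = \left[\begin{smallmatrix}\eta I & 0\\ 0 & 0\end{smallmatrix}\right]$ and $\Pl[t]$ having only its bottom-right entry equal to $1$. For $\sigma(x) = x$, substituting $\Aa[t]=\eta I$ and $\dd[t]=0$ into the scalar recursions for $y^{(t)}$ and $y^{(t)}_q$ from Section~\ref{subsec:modeling} and applying the push-through identity $(I - c\,\Sigma)X = X(I - c\,X^\top X)$ (equivalently, invoking Lemma~\ref{lem:matrixformat} with all $\Aa[i]$ equal), I get that the residual on the query after $L$ loops is exactly
\begin{align*}
    \big|\TFdd{\Zz[0]} - w^{*\top}x_q\big| \;=\; \big|\,w^{*\top}(I - \eta'\Sigma)^{L} x_q\,\big|,
\end{align*}
where $\eta' = \Theta(\eta)$ is the effective step size after accounting for the $\tfrac1n$ normalization in Equation~\ref{eq:tftwo}. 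This is precisely the residual of $L$ steps of gradient descent, started from $0$, on $w\mapsto \tfrac12\|X^\top w - y\|^2$, whose unique minimizer (since $\Sigma\succcurlyeq\alpha I$) is $w^*$.

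It then remains to bound $\|(I-\eta'\Sigma)^{L}\|_{\mathrm{op}}$. Since $(X,w^*,y,x_q)\in \mathcal S_{\alpha,\beta}$ gives $\alpha I \preccurlyeq \Sigma \preccurlyeq \beta I$, choosing $\eta' = 1/\beta$ puts the eigenvalues of $I - \eta'\Sigma$ in $[0,\, 1 - \alpha/\beta]$, so $\|(I-\eta'\Sigma)^{L}\|_{\mathrm{op}} \le (1-\alpha/\beta)^{L} \le e^{-L\alpha/\beta}$ (taking $\eta' = 2/(\alpha+\beta)$ would improve the base to $\tfrac{\beta-\alpha}{\beta+\alpha}$ but not the order). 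By Cauchy--Schwarz the query residual is at most $\|w^*\|\,\|x_q\|\,e^{-L\alpha/\beta}$, so it is below $\epsilon$ once $L \ge \tfrac{\beta}{\alpha}\ln\!\big(\|w^*\|\|x_q\|/\epsilon\big) = O\!\big(\ln(1/\epsilon)\,\beta/\alpha\big)$, absorbing the instance norms into constants as elsewhere in the paper.

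There is no essential obstacle here: the statement is the looped reading of the gradient-descent construction of~\cite{von2023transformers} plus a one-line spectral estimate. The only points that need care are (i) the push-through identity, so that iterating the recursion genuinely produces $(I-\eta'\Sigma)^{L}$ acting on $x_q$ (handled by Lemma~\ref{lem:matrixformat}); (ii) bookkeeping the $\tfrac1n$ factor in Equation~\ref{eq:tftwo}, which merely rescales the step size; and (iii) explaining the gap between this $O(\beta/\alpha)$ upper bound and the $\Omega(\sqrt{\beta/\alpha})$ lower bound of Theorem~\ref{cor:lowerbound} --- a single shared scalar step size can only implement vanilla gradient descent, whose contraction factor on a quadratic is $1-\Theta(\alpha/\beta)$ per step, whereas the $\sqrt{\beta/\alpha}$ rate of Theorem~\ref{thm:matchingupper} requires the Chebyshev schedule $\Aa[t] = c_t I$ with $c_t$ varying across layers, which weight sharing forbids.
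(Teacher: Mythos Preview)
Your proposal is correct and follows essentially the same route as the paper's own proof: set the shared weight to $A=\eta I$ with $\dd[t]=0$, unroll (via Lemma~\ref{lem:matrixformat} or the push-through identity) to obtain the residual $w^{*\top}(I-\eta'\Sigma)^L x_q$, pick $\eta'\sim 1/\beta$ so that $\|I-\eta'\Sigma\|\le 1-c\,\alpha/\beta$, and conclude $L=O(\ln(1/\epsilon)\,\beta/\alpha)$ suffices. The paper uses $\eta=1/(2\beta)$ rather than your $1/\beta$, and is terser about the $1/n$ bookkeeping and the Cauchy--Schwarz step, but the argument is the same.
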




\subsection{Out-of-Distribution Generalization}\label{sec:outofdist}
\looseness-1So far we examined the crucial role of depth for task-diversity settings. However, it is unclear if the weights that minimize the population loss are robust to distributions shifts. To this end, we study the task diversity of transformers through the lens of out-of-distribution generalization.   
We first study the limitations of multi-layer Transformers with respect to out-of-distribution generalization. To enable this study, we consider the population loss with respect to various distributions on the covariance matrix in the restricted attention case defined in~\Cref{subsec:modeling}. In particular, we focus on distribution shifts on the covariance matrix of the linear regression instances, so we set $\dd[t] = 0$ and define the loss function with respect to a distribution $P$ on the covariance matrix $\Sigma = XX^\top$; for every realization of the covariance matrix sampled from $P$, we calculate the loss by averaging over $w^* \sim N(0,\Sigma^{-1})$ and $x_q \sim N(0,\Sigma)$. This choice of distribution follows \citep{ourwork} and is natural since we expect the distribution of $x_q$ to have the same covariance as the data covariance matrix of the observed data, $\Sigma$, and it also simplifies the analysis. Therefore, we define the loss as    
\begin{align}
    \losss^{P}(\{A^{(t)}\}_{t=0}^{L-1}) \triangleq \mathbb E_{\Sigma \sim P, w^*\sim N(0,\Sigma^{-1}), x_q\sim N(0,\Sigma)} \pa{\TFd{\Zz[0]} - y}^2.\label{eq:loss_dist}
\end{align}
In particular, for the special case of the looped model (where $A^{(t)} = A$ for $t \in [L-1]$), the loss is:
\begin{align*}
    \losss^{P}(A) \triangleq \mathbb E_{\Sigma \sim P, w^*\sim N(0,\Sigma^{-1}), x_q\sim N(0,\Sigma)}\pa{\TFdd{\Zz[0]} - y}^2.
\end{align*}
When the distribution $P$ is a point mass on covariance $\Sigma$, we denote the loss by $\losss^{\Sigma}(\{A^{(t)}\}_{t=0}^{L-1})$ and $\losss^{\Sigma}(A)$. We denote a distribution on the covariance matrices that is only supported on  $\{\alpha I \preccurlyeq \Sigma \preccurlyeq \beta I \}$ by $P_{\alpha, \beta}$, with sub-indices $\alpha, \beta$ showing the interval of the eigenvalues for the support of the covariances.
The key message that we deliver in this section is that loop Transformers are more robust for out-of-distribution generalization compared to multilayer Transformers.
To elucidate the advantage of looped over multilayer Transformers, we need the following definition (Section~\ref{sec:robustloop}). 
\begin{definition}
    We say the distribution $P_{\alpha, \beta}$ supported on the covariance matrices of linear regression instances in $S_{\alpha, \beta}$ (or we abbreviate by saying supported on $S_{\alpha, \beta}$) is $(\epsilon, \delta)$ right-spread-out if for every fixed unit vector $v$, $P_{\alpha, \beta}(\|X^\top v\|^2 \geq (1 - \delta) \beta) \geq \epsilon$.
    \label{def:right-sidedness}
\end{definition}
The right-spread-out property restricts the distribution on the data covariance $X X^\top$ so that for every direction $v\in \mathbb R^d$, it puts a minimum amount of mass on matrices whose eigenvectors with large eigenvalues, that are close to the right end point of the interval $(\alpha, \beta)$, are close to $v$.

\subsubsection{Weakness of Multilayer Transformers for Out-of-Distribution Generalization}
\label{sec:pitfalls}

From Section~\ref{sec:upperboundmultilayer}, recall that there exists a multilayer Transformer of depth $O(\ln(1/\epsilon) \sqrt{\beta/\alpha})$ that can solve any linear regression with covariance $ \Sigma$ such that $\alpha I\preccurlyeq \Sigma \preccurlyeq \beta I$. However, it is unclear if minimizing the training loss recovers such a network. Here, we show that this is not the case. 

\begin{theorem}[Multilayer Transformers blow up out of distribution]\label{lem:multiblowup}
    Given $\frac{\beta}{\alpha} \geq 10$ and for any $\epsilon > 0$, there exists a $(\frac{1}{L}, 0)$ right-spread-out distribution $P_{\alpha, \beta}$ such that there exists a global minimizer $\{\Aas[t]\}_{t=0}^{L-1}$ of the loss $\losss^{P_{\alpha, \beta}}(\{\Aa\}_{t=0}^{L-1},0)$ for linear Transformer defined in~\eqref{eq:tftwo}, so that for any other distribution  $\tilde P_{\alpha, \beta}$ on $\Sigma$ which has at least $\epsilon$ mass supported on $8\alpha I \preccurlyeq \Sigma \preccurlyeq (1-\delta')\beta I$, the out of distribution loss $\losss^{(\tilde P_{\alpha, \beta})}$ exponentially blows up at $\{\Aas[t]\}_{t=0}^{L-1}$ with the depth:
    \begin{align*}
        \losss^{(\tilde P_{\alpha, \beta})}(\{\Aas[t]\}_{t=0}^{L-1}, 0) \geq \epsilon \delta' d 9^{L-1}.
    \end{align*}
\end{theorem}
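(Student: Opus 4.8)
The plan is to reduce the out-of-distribution loss to a scalar quantity in the eigenvalues of $\Sigma$, to take the training distribution to be a point mass at $\beta I$, and then to exhibit a global minimizer whose induced polynomial fits that single point exactly while exploding on the window $[8\alpha,(1-\delta')\beta]$. Here $\delta'$ is a small constant we are free to fix (say $\delta'=\tfrac1{10}$, which is admissible since $\beta/\alpha\ge10$ forces $(1-\delta')\beta\ge8\alpha$).

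\emph{Closed form of the loss.} Unrolling the restricted-attention recursion of Section~\ref{subsec:modeling} with $\dd[t]=0$ and $\sigma=\mathrm{id}$ — the identity that also underlies Theorem~\ref{thm:termination} — gives $|\TFd{\Zz[0]}-y|=|w^{*\top}\prod_{t=0}^{L-1}(I-\Sigma\Aa[t])\,x_q|$. I restrict to diagonal weights $\Aa[t]=a_tI$, so the product is $q(\Sigma)$ for the polynomial $q(x)=\prod_{t=0}^{L-1}(1-a_tx)$ of degree $\le L$ with $q(0)=1$. Using independence of $w^*\sim N(0,\Sigma^{-1})$ and $x_q\sim N(0,\Sigma)$, the identities $\mathbb E[x_qx_q^\top]=\Sigma$ and $\mathbb E[w^*w^{*\top}]=\Sigma^{-1}$, and the fact that $q(\Sigma)$ is symmetric and commutes with $\Sigma$, the per-covariance loss becomes
\begin{align*}
\losss^{\Sigma}(\{a_tI\},0)=\Tr\!\pa{q(\Sigma)\,\Sigma\,q(\Sigma)\,\Sigma^{-1}}=\Tr\!\pa{q(\Sigma)^2}=\sum_{j=1}^{d} q(\mu_j)^2,
\end{align*}
where $\mu_1,\dots,\mu_d$ are the eigenvalues of $\Sigma$, and hence $\losss^{P}(\{a_tI\},0)=\mathbb E_{\Sigma\sim P}\sum_j q(\mu_j)^2$. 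Since $\losss^{P}\ge0$ for every weight configuration (diagonal or not), any configuration achieving loss $0$ is automatically a global minimizer.

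\emph{The bad instance.} Let $P_{\alpha,\beta}$ be the point mass on $\Sigma=\beta I$. Because $v^\top(\beta I)v=\beta=(1-0)\beta$ for every unit vector $v$, this distribution is $(\tfrac1L,0)$-right-spread-out — in fact $(1,0)$ — in the sense of \Cref{def:right-sidedness}. Now pick $a_0=1/\beta$ and $a_1=\dots=a_{L-1}=1/\rho$ with $\rho>0$ a parameter, so $q(x)=(1-x/\beta)(1-x/\rho)^{L-1}$; since $q(\beta)=0$, setting $\Aas[t]:=a_tI$ gives $\losss^{P_{\alpha,\beta}}(\{\Aas[t]\},0)=0$, hence $\{\Aas[t]\}_{t=0}^{L-1}$ is a global minimizer. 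The point of using a point mass is that it pins $q$ down only at $x=\beta$, leaving the other $L-1$ roots entirely free — this is exactly the overfitting we exploit. For any $\tilde P_{\alpha,\beta}$ placing mass $\ge\epsilon$ on $\{8\alpha I\preccurlyeq\Sigma\preccurlyeq(1-\delta')\beta I\}$, every eigenvalue $\mu$ of such a covariance obeys $|1-\mu/\beta|\ge\delta'$ and $|1-\mu/\rho|=\mu/\rho-1\ge 8\alpha/\rho-1$, so $q(\mu)^2\ge(\delta')^2(8\alpha/\rho-1)^{2(L-1)}$ and therefore $\losss^{\Sigma}(\{\Aas[t]\},0)=\sum_{j}q(\mu_j)^2\ge d(\delta')^2(8\alpha/\rho-1)^{2(L-1)}$. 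Averaging a nonnegative integrand over $\tilde P_{\alpha,\beta}$ then yields $\losss^{\tilde P_{\alpha,\beta}}(\{\Aas[t]\},0)\ge \epsilon\, d(\delta')^2(8\alpha/\rho-1)^{2(L-1)}$, and choosing $\rho$ small enough — proportional to $\alpha\delta'$ — makes the right-hand side at least $\epsilon\,\delta'\,d\,9^{L-1}$, which is the claimed bound.

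\emph{Main obstacle.} Essentially all of the work is the polynomial engineering above: the free roots of $q$ must be placed so that $q$ simultaneously vanishes on the training support, has no zero inside $[8\alpha,(1-\delta')\beta]$, and grows there at a rate that — after squaring and absorbing the $\delta'$ and $d$ factors — exceeds $9^{L-1}$; this growth estimate is only usable because of the exact second-moment identity derived in the first step. The rest is routine: global optimality of $\{\Aas[t]\}$ is immediate from nonnegativity of the loss, the right-spread-out check is a single line, and passing from a worst covariance in the support of $\tilde P_{\alpha,\beta}$ to the final bound uses only that the loss is an expectation of a nonnegative quantity.
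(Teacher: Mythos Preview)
Your proof is correct and follows the same core strategy as the paper: reduce the per-covariance loss for scalar weights $A^{(t)}=a_tI$ to $\Tr(q(\Sigma)^2)$ with $q(x)=\prod_t(1-a_tx)$, then place the roots of $q$ so that it vanishes on the training support while being large on $[8\alpha,(1-\delta')\beta]$.

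The difference is in the training distribution. The paper uses $L$ point masses --- one at $\beta I$ and $L-1$ with eigenvalues in $[\alpha,2\alpha]$ --- so that the global minimizer $A^{*(t)}=(B^{(t)})^{-1}$ has \emph{all} of its roots pinned down by the training data; this paints a more literal picture of overfitting to $L$ distinct tasks, and the construction does not depend on $\delta'$. You instead use a single point mass at $\beta I$, leaving $L-1$ roots entirely free and pushing them to a tunable parameter $\rho$; this is simpler and equally valid. One caveat: you let $\rho$ depend on $\delta'$, which requires reading $\delta'$ as fixed before $P$ and $A^*$ (as you explicitly do). If instead $\delta'$ is a property of $\tilde P$ chosen \emph{after} $A^*$ --- which is how the paper's proof treats it --- you must fix $\rho$ independently of $\delta'$. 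Taking $\rho=2\alpha$ then gives the bound $\epsilon\, d(\delta')^2 9^{L-1}$, which is exactly what the paper's own computation produces (its argument also lands on $(\delta')^2$ rather than the stated $\delta'$). So the only gap is quantifier bookkeeping, and the fix is immediate.
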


In particular, Theorem~\ref{lem:multiblowup} shows that multilayer Transformers are essentially prone to overfiting, so that with a slight deviation of the test distribution from the train distribution (even $O(\tfrac{L}{d9^{L-1}})$) in Wasserstein distance, the test loss will incur constant error for the global minimizer of the train loss.

\subsubsection{Out-of-distribution generalization of Looped Transformers}\label{sec:robustloop}
Our result in~\Cref{lem:multiblowup} states that multi-layer Transformers can behave poorly for out-of-distribution generalization. Here, we show that Looped Transformers can indeed circumvent this issue by restricting the model with weight sharing when the training distribution is right-spread-out.

\begin{theorem}[Looped Transformer is robust out-of-distribution]\label{lem:looprobust}
    Given an $(\epsilon, \delta)$ right-spread-out distribution $P(\alpha, \beta)$ on  $S_{\alpha, \beta}$, let ${A^*}$ be the global minimizer of $\losss^{(P_{\alpha, \beta})}(A,0)$ for the linear Looped Transformer defined in~\eqref{eq:tfftwo} with $\sigma(x) = x$ in the region $\alpha I \preccurlyeq A^{-1} \preccurlyeq \beta I$. Then, given any $\epsilon' \leq \epsilon$ and any other distribution $\tilde P_{\alpha, (1-\delta')\beta}$ that is supported on $(\alpha, (1-\delta')\beta)$, for 
    
    $\delta' = \delta + \frac{\ln(d/ \epsilon')}{L}$ such that $\delta' \leq 1 - \frac{\alpha}{\beta}$, we have
        $\losss^{(\tilde P_{\alpha, (1-\delta')\beta})}(A^*,0) \leq \epsilon' \wedge \pa{1-\frac{\alpha}{\beta}}^{2L}.$

\end{theorem}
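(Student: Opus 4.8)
The plan is to put the looped loss in spectral form, argue that the minimizer $A^*$ over the spread-out training distribution $P_{\alpha,\beta}$ must have its spectrum \emph{caged} near $\tfrac1\beta I$ — not too large (else it blows up on the large covariances that right-spread-outness forces $P$ to hit) and not collapsed onto the lower corner — and then observe that on the \emph{shrunk} support of $\tilde P$ this cage yields a strict per-coordinate contraction, with the calibration $\delta'=\delta+\ln(d/\epsilon')/L$ chosen precisely to turn the resulting estimate into $\epsilon'\wedge(1-\tfrac\alpha\beta)^{2L}$.

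\textbf{Steps 1--2: closed form and smallness of the optimum.} With $\sigma(x)=x$, $d^{(t)}=0$ and weight sharing, the recursions for $y^{(t)}$ and $y^{(t)}_q$ from Section~\ref{subsec:modeling} telescope to $\TFdd{\Zz[0]}-y=-{w^*}^\top(I-\Sigma A)^Lx_q$ with $\Sigma=XX^\top$. Integrating over independent $w^*\sim N(0,\Sigma^{-1})$, $x_q\sim N(0,\Sigma)$ and conjugating by $\Sigma^{1/2}$ (which symmetrizes $I-\Sigma A$ into $I-\Sigma^{1/2}A\Sigma^{1/2}$) gives
\begin{align*}
\losss^{\Sigma}(A)=\Tr\!\left((I-\Sigma^{1/2}A\Sigma^{1/2})^{2L}\right)=\sum_{i=1}^{d}\big(1-\nu_i(\Sigma,A)\big)^{2L},
\end{align*}
where $\nu_i(\Sigma,A)$ are the eigenvalues of $\Sigma A$; thus $\losss^{P}(A)=\E_{\Sigma\sim P}\sum_i(1-\nu_i(\Sigma,A))^{2L}$, a convex function of $A$, so $A^*$ is characterized by first-order optimality on the box $\beta^{-1}I\preccurlyeq A\preccurlyeq\alpha^{-1}I$. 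Plugging the feasible point $A_0=\tfrac1\beta I$ gives $\nu_i(\Sigma,A_0)=\lambda_i(\Sigma)/\beta\in[\alpha/\beta,1]$ for every $\Sigma$ in the support of $P_{\alpha,\beta}$, hence $\losss^{P_{\alpha,\beta}}(A^*)\le\losss^{P_{\alpha,\beta}}(A_0)\le d\,(1-\tfrac\alpha\beta)^{2L}$.

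\textbf{Step 3: caging the spectrum of $A^*$ (the crux).} Let $v$ be a unit top eigenvector of $A^*$. By $(\epsilon,\delta)$ right-spread-outness (Definition~\ref{def:right-sidedness}) a $P$-mass $\ge\epsilon$ of covariances satisfy $v^\top\Sigma v\ge(1-\delta)\beta$; expanding $v$ in the eigenbasis of such a $\Sigma$ and using $\tfrac1\lambda\le\tfrac1\beta+\tfrac{\beta-\lambda}{\alpha\beta}$ forces most of $v$'s mass onto eigenvectors with eigenvalue near $\beta$, giving $v^\top\Sigma^{-1}v\le\tfrac1\beta+\tfrac\delta\alpha$. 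Then the Rayleigh quotient of $\Sigma^{1/2}A^*\Sigma^{1/2}$ at $\Sigma^{-1/2}v$ equals $\tfrac{v^\top A^*v}{v^\top\Sigma^{-1}v}\ge\tfrac{\lambda_{\max}(A^*)}{1/\beta+\delta/\alpha}$, so $\nu_d(\Sigma,A^*)$ is at least this; if $\lambda_{\max}(A^*)$ exceeded $\tfrac1{(1-\delta)\beta}$ by a constant factor, this eigenvalue would clear $1$ enough that $\epsilon\,(\nu_d-1)^{2L}$ would beat the Step~2 bound. Matching exponents yields $\lambda_{\max}(A^*)\le\tfrac{c}{(1-\delta)\beta}$ with $c\to1$ as $(d/\epsilon)^{1/2L}\to1$. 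Together with the box constraint $A^*\succcurlyeq\beta^{-1}I$ — and a matching \emph{lower} bound $\lambda_{\min}(A^*)\gtrsim\tfrac1{(1-\delta')\beta}$, which I expect to follow from convex optimality over the box and the fact that $P$'s support reaches down to $\alpha$ — this pins $A^*$ inside a narrow two-sided spectral cage around $\tfrac1\beta I$.

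\textbf{Step 4 and the main obstacle.} For any $\Sigma$ in the support of $\tilde P_{\alpha,(1-\delta')\beta}$ one has $\alpha I\preccurlyeq\Sigma\preccurlyeq(1-\delta')\beta I$, so the cage gives $\nu_i(\Sigma,A^*)\in[\alpha/\beta,\ c\,\tfrac{1-\delta'}{1-\delta}]\subset[\alpha/\beta,1)$ for all $i$, whence $|1-\nu_i|\le1-\tfrac\alpha\beta$ and $\losss^{\Sigma}(A^*)\le d(1-\tfrac\alpha\beta)^{2L}$ — already the $(1-\tfrac\alpha\beta)^{2L}$ branch. The calibration $\delta'=\delta+\tfrac{\ln(d/\epsilon')}{L}$ upgrades this: the extra gap improves the per-coordinate contraction to $(1-\tfrac\alpha\beta)e^{-(\delta'-\delta)/2}$, so $d$ copies of its $2L$-th power equal $d(1-\tfrac\alpha\beta)^{2L}e^{-L(\delta'-\delta)}=(1-\tfrac\alpha\beta)^{2L}\epsilon'\le\epsilon'\wedge(1-\tfrac\alpha\beta)^{2L}$, and averaging over $\Sigma\sim\tilde P$ preserves the bound. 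The hard part is Step~3: converting ``$A^*$ minimizes $\losss^{P_{\alpha,\beta}}$'' plus right-spread-outness into a quantitatively tight \emph{two-sided} cage, with the slack constant $c$ (and the lower-bound constant) close enough to $1$ that Step~4's estimate survives for covariances with eigenvalues all the way up to $(1-\delta')\beta$; two bookkeeping points drive the statement — the geometric fact that $v^\top\Sigma v$ near $\beta$ forces $v^\top\Sigma^{-1}v$ near $1/\beta$ up to an $O(\delta)$ error (the source of the $\delta$ in $\delta'$), and the fact that $\losss$ is a trace, i.e. a $d$-fold sum, so the calibration must absorb a factor $d$ (the source of $\ln d$ in $\delta'$).
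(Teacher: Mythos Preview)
Your overall architecture---trace formula, comparison to $A_0=\beta^{-1}I$, a spectral ``cage'' on $A^*$ via the right-spread-out property, then contraction on the shrunk support---matches the paper's proof. However, your Step~3 miscomputes the cage constant, and this error propagates into a wrong geometric picture in Step~4.

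\textbf{The cage has $c\approx 2$, not $c\approx 1$.} Trace your own argument: the contradiction you set up is $\epsilon(\nu_d-1)^{2L}>d(1-\tfrac{\alpha}{\beta})^{2L}$, i.e.\ $\nu_d>1+(1-\tfrac{\alpha}{\beta})(d/\epsilon)^{1/(2L)}$. Even as $(d/\epsilon)^{1/(2L)}\to 1$ this threshold is $2-\tfrac{\alpha}{\beta}$, not $1$; inverting through your Rayleigh quotient gives $\lambda_{\max}(A^*)\lesssim (2-\tfrac{\alpha}{\beta})/\beta$, not $1/\beta$. (The paper, using the cruder Step~2 bound $\losss(A^*)\le d$, gets the cleaner $\lambda_{\max}(A^*)\le \tfrac{2}{(1-\delta)\beta}(1+o(1))$.) So your claim ``$c\to 1$'' is false except in the uninteresting regime $\alpha/\beta\to 1$.

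\textbf{Step 4 then breaks.} With the correct cage, on the shrunk support one has $\nu_i\in[\tfrac{\alpha}{\beta},\, 2(1-\tfrac{\ln(d/\epsilon')}{L})]$---the eigenvalues of $\Sigma^{1/2}A^*\Sigma^{1/2}$ can go up to nearly $2$, \emph{not} stay below $1$. Your assertion $\nu_i\in[\alpha/\beta,1)$ is therefore wrong, and the contraction formula $(1-\tfrac{\alpha}{\beta})e^{-(\delta'-\delta)/2}$ you write is unsupported. The actual contraction is two-sided: $|1-\nu_i|\le \max\!\big(1-\tfrac{\alpha}{\beta},\,1-\tfrac{2\ln(d/\epsilon')}{L}\big)$, which after raising to $2L$ and multiplying by $d$ yields the $\epsilon'\vee d(1-\tfrac{\alpha}{\beta})^{2L}$-type bound the paper obtains.

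\textbf{Two smaller points.} Your speculated lower bound $\lambda_{\min}(A^*)\gtrsim \tfrac{1}{(1-\delta')\beta}$ is neither proved nor needed---the paper uses only the box constraint $A^*\succcurlyeq\beta^{-1}I$ on that side. Also, the paper's Step~3 argument is simpler than your Rayleigh-quotient-at-$\Sigma^{-1/2}v$ detour: it argues directly that if $\|I-\Sigma^{1/2}A^*\Sigma^{1/2}\|\le 1+\eta$ then $\Sigma\preccurlyeq (2+\eta){A^*}^{-1}$, which immediately contradicts $v^\top\Sigma v\ge(1-\delta)\beta$ and $v^\top{A^*}^{-1}v$ small. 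This avoids the $v^\top\Sigma^{-1}v$ bound entirely.
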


\looseness-1Theorem~\ref{lem:looprobust} demonstrates that if the training distribution is sufficiently spread out, the global minimizer in a looped model remains robust to a wide range of distributional shifts in the test loss. Specifically, the test distribution can be any arbitrary distribution on the covariance matrix, as long as its eigenvalues lie within the slightly narrowed interval of $(\alpha, (1-\delta')\beta)$, where $\delta' = \delta + \gamma\ln(d)$ approaches $\delta$, the spread-out parameter of the training distribution, as the number of loops increases. This contrasts with the behavior of multilayer Transformers, which, as shown in Theorem~\ref{lem:multiblowup}, can overfit when trained on right-spread-out distributions. 

\vspace{-3mm}
\subsection{Non-monotonicity of the loss in Multilayer Transformer}\label{sec:nonmonotone}


In \Cref{sec:adaptive}, we discussed how one can adaptively pick a depth based on problem difficulty. A related idea is that of early-exiting \citep{teerapittayanon2016branchy}, where the goal is to exit the model at an earlier layer for easier example. This naturally provides inference efficiency. An important consideration in the early exit literature is monotonicity with respect to layers, i.e., the model's error decreases as the layer index increases. This has been studied in detail \citep{baldock2021deep,laitenbergerexploring} and there is also work on enforcing such monotonicity \citep{schuster2022confident,jazbec2024towards}.
In the linear regression incontext learning setting considered in this paper, we make an intriguing observation: if a multilayer model has monotonically decreasing error with depth for a diverse set of distributions, then it must be a looped model.
This suggests that looped models are naturally suited for early-exiting strategies. We believe this phenomenon deserves further exploration in future work. The following theorem formalizes this idea.

\begin{theorem}[Monotonic behavior w.r.t depth $\rightarrow$ equal weights]\label{lem:nonmon}
    For multi-layer Transformer $\TF{\{\Aa[t], 0\}_{t=0}^{L-1}}$ if the average value of the loss is monotonic for $0\leq t\leq L-1$ for every covariance matrix $\Sigma$, then we have $A^{(1)} = \dots = A^{(L)}$. Moreover, for the loop model and for every covariance $\Sigma$, there exists an $L_0$ which depends on $\Sigma$, such that for all $L \geq L_0$, the behavior of the loss becomes monotonic with respect to $L$.
\end{theorem}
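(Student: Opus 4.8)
The plan is to work with the closed form for the restricted linear Transformer with $\dd[t]=0$, namely (from Lemma~\ref{lem:matrixformat}, used elsewhere in the paper) that after $t$ layers the query residual is $y_q - y_q^{(t)} = {w^*}^\top \prod_{i=0}^{t-1}(I - \Sigma \Aa[i]) x_q$, and the loss at layer $t$ for a point-mass distribution on $\Sigma$ is obtained by averaging over $w^* \sim N(0,\Sigma^{-1})$, $x_q \sim N(0,\Sigma)$. A direct Gaussian computation gives $\losss^{\Sigma}(\{\Aa[i]\}_{i=0}^{t-1}) = \trace{\Sigma^{1/2}\prod_{i=0}^{t-1}(I - \Sigma \Aa[i])\Sigma^{-1}\prod_{i=t-1}^{0}(I - \Aa[i]\Sigma)\Sigma^{1/2}} = \trace{B_t B_t^\top}$ where $B_t \triangleq \prod_{i=0}^{t-1}(I - \Sigma^{1/2}\Aa[i]\Sigma^{1/2})$ (after conjugating by $\Sigma^{1/2}$; the $\Sigma^{\pm 1/2}$ factors cancel up to cyclicity of trace). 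So the layer-$t$ loss is $\|B_t\|_F^2$ with $B_t = M_t M_{t-1}\cdots M_1$ and $M_{i+1} \triangleq I - \Sigma^{1/2}\Aa[i]\Sigma^{1/2}$.

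For the first claim, I would use monotonicity \emph{for every $\Sigma$}. Fix any two consecutive indices. Monotone decreasing means $\|B_{t+1}\|_F^2 \le \|B_t\|_F^2$ for all $t$ and all $\Sigma$; since we can also rescale $\Sigma \mapsto c\Sigma$ freely within PSD cone this is a strong constraint. The cleanest route: pick $\Sigma = I$ first, so $M_{i+1} = I - \Aa[i]$; monotonicity then says $\|(I-\Aa[t])B_t\|_F \le \|B_t\|_F$ for every $t$, i.e. $I - \Aa[t]$ is a contraction on the column space of $B_t$, and similarly (taking the loop length $L$ to be arbitrary, so $t$ ranges freely) on a rich family of subspaces; then perturb $\Sigma$ around $I$ and differentiate the constraint $\|B_{t+1}\|_F^2 - \|B_t\|_F^2 \le 0$ (which must hold with equality-or-inequality consistently as $\Sigma$ varies continuously and the inequality can be made to flip unless the layer map is actually norm-preserving on the relevant subspace). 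The key algebraic fact to extract is: if $\|MN\|_F \le \|N\|_F$ must hold for the product and simultaneously the reverse-order constraints coming from considering the loss as a function built from $M_1,\dots,M_L$ in different groupings, then each $M_i$ must equal a common operator. Concretely I would argue that monotonicity forces $\|(I - \Sigma\Aa[t])v\|_\Sigma = \|v\|_\Sigma$-type isometry on enough vectors to conclude $\Sigma^{1/2}\Aa[t]\Sigma^{1/2}$ is independent of $t$ for all $\Sigma$, hence $\Aa[t]$ is independent of $t$.

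For the second claim (looped model eventually monotone), set $\Aa[i] = A$ for all $i$, so $B_t = M^t$ with $M = I - \Sigma^{1/2}A\Sigma^{1/2}$, and the layer-$t$ loss is $\|M^t\|_F^2 = \trace{(M^\top)^t M^t}$. Write $M = U D V^\top$... actually simpler: $\trace{(M^t)^\top M^t} = \sum_j s_j((M)^t)^2$ where $s_j$ are singular values of $M^t$, and by Gelfand/spectral-radius asymptotics $s_j(M^t)^{1/t} \to \rho(M)$... this isn't quite enough for monotonicity per se. Instead: if $M$ is symmetric (which it is, since $\Sigma^{1/2}A\Sigma^{1/2}$ is symmetric when $A$ is — and the natural minimizers $A$ are symmetric PSD), then $\|M^t\|_F^2 = \sum_j \lambda_j(M)^{2t}$, a sum of nonnegative geometric-type terms $\lambda_j^{2t}$. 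Each term $\lambda_j^{2t}$ is monotone in $t$: decreasing if $|\lambda_j|<1$, constant if $|\lambda_j|=1$, increasing if $|\lambda_j|>1$. If all $|\lambda_j|\le 1$ (true at the minimizer, since we can at least achieve $A$ with $M$ a contraction), then $\|M^t\|_F^2$ is monotone decreasing for \emph{all} $t\ge 1$, so $L_0 = 1$; if some $|\lambda_j|>1$, the largest such term eventually dominates and the sum becomes monotone increasing once $t$ is large enough that the dominant growing term outweighs the aggregate variation of the others, giving a finite $L_0(\Sigma)$. I would make the "eventually" quantitative via a standard argument: $f(t) = \sum_j \lambda_j^{2t}$; $f(t+1)-f(t) = \sum_j \lambda_j^{2t}(\lambda_j^2-1)$; once $\lambda_{\max}^{2t}(\lambda_{\max}^2-1) > \sum_{j: |\lambda_j|<1} \lambda_j^{2t}(1-\lambda_j^2)$, which happens for $t \ge L_0$ since the LHS grows geometrically and the RHS is bounded, we get $f(t+1)\ge f(t)$, and this persists for all larger $t$ by the same comparison; symmetrically in the all-contracting case.

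The main obstacle is the first claim: going from "pairwise loss is monotone for every $\Sigma$" to "all weight matrices coincide" requires squeezing a genuine rigidity statement out of a family of Frobenius-norm inequalities indexed by $\Sigma$. The delicate point is that monotonicity only gives an inequality, not an equality, so I must leverage the fact that it holds \emph{simultaneously for all $\Sigma$} (and, implicitly, for all choices of which layer index we are comparing, since the statement quantifies over the layer index within a fixed-depth model but we may also vary $L$): by continuity in $\Sigma$ and homogeneity, an inequality that cannot be tightened to an equality on the relevant subspace would flip sign under a small perturbation of $\Sigma$, contradicting the universal validity. Turning this informal "it would flip" into a clean argument — identifying the exact subspace on which each $I - \Sigma\Aa[t]$ must act isometrically, and then propagating that through the product to pin down $\Aa[t]$ uniformly — is the heart of the proof and where I would spend the most care.
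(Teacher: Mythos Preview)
Your treatment of the second claim (looped model eventually monotone) is essentially the paper's proof: diagonalize the symmetric matrix $M = I - \Sigma^{1/2}A\Sigma^{1/2}$, write the loss as $\sum_j \lambda_j^{2L}$, and argue termwise with a dominant-term comparison for the case $\max_j |\lambda_j| > 1$. That part is fine.

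The first claim has a genuine gap. Your plan is to start from $\Sigma = I$, extract a contraction condition on each $I - \Aa[t]$, then ``perturb $\Sigma$ around $I$ and differentiate'' to force an isometry-type rigidity. You yourself flag this as the main obstacle, and indeed it is not a proof: monotonicity at $\Sigma = I$ only says $\|(I - \Aa[t])B_t\|_F \le \|B_t\|_F$ for the particular $B_t$ produced by the earlier layers, which does not force the $\Aa[t]$ to coincide (distinct contractions would satisfy it), and the ``would flip under perturbation'' heuristic is never made precise. There is no concrete mechanism in your outline that pins down $\Aa[i] = \Aa[j]$.

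The paper's key device, which you are missing, is to specialize to \emph{rank-one} covariance: take $\Sigma^{1/2} = vv^\top$ for an arbitrary vector $v$. Then each factor $I - \Sigma^{1/2}\Aa[i]\Sigma^{1/2}$ acts as $(1-\gamma_i)$ on the line through $v$ and as the identity on its complement, where $\gamma_i = \|v\|^2\, v^\top \Aa[i] v$. The layer-$t$ loss collapses to the scalar expression $\prod_{i=0}^{t-1}(1-\gamma_i)^2 + (d-1)$, so monotonicity in $t$ forces every $|1-\gamma_t|$ to lie on the same side of $1$, i.e.\ every $\gamma_t$ on the same side of $2$. If $v^\top \Aa[i] v \neq v^\top \Aa[j] v$ for some $i,j$ and some $v$, one can rescale $v$ so that $\gamma_i < 2 < \gamma_j$, contradicting monotonicity. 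Hence $v^\top \Aa[i] v = v^\top \Aa[j] v$ for all $v$, which gives $\Aa[i] = \Aa[j]$. This rank-one reduction plus the scaling freedom in $v$ is the entire argument; your full-rank perturbation route does not reach it.
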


     Proof of Theorem~\ref{lem:nonmon} can be found in Section~\ref{sec:proofofnonmonotone}. At a high level, Theorem~\ref{lem:nonmon} states that in order to have a multilayer Transformer in the restricted attention case (Section~\ref{sec:prelims}) for which the population loss behaves monotonically for an arbitrary choice of distribution on the covariance matrix, then the only possibility is for the Transformer to be looped. Furthermore, Theorem~\ref{lem:nonmon} states that this monotonicity property indeed holds for the looped model for large enough depth. The proof of Theorem~\ref{lem:nonmon} uses the fact that for any covariance distribution, the loss function can be related to the spectrum of the weight matrices $\Aa[i]$'s, and that having unequal $\Aa[i]$'s, one can construct an adversarial distribution on the covariance for which the loss does not behave monotonically with respect to the depth.

\section{Experiments}
\label{sec:exps}
\begin{figure*}[!tbp]
    \centering
    \begin{subfigure}{0.38\textwidth}
\centering    
    \includegraphics[width=\textwidth]{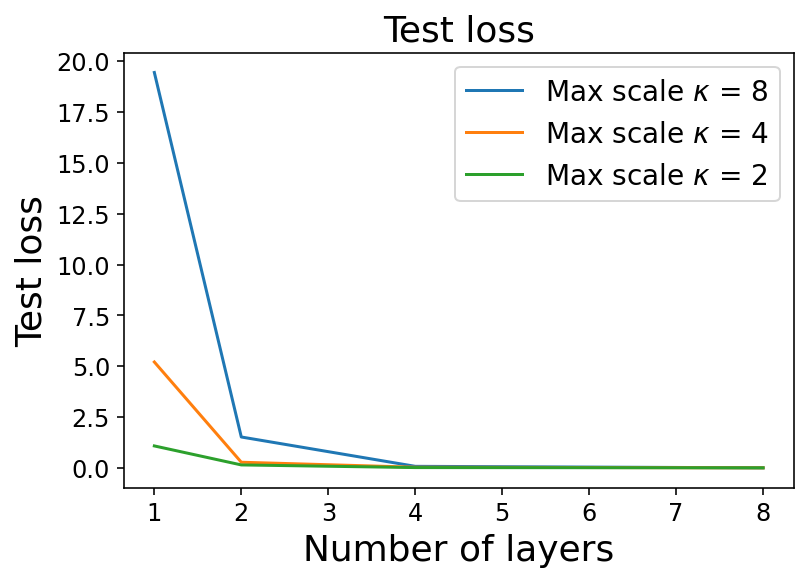}
    \label{fig:difficulty_multilayer_depth}
    \end{subfigure}
\centering
    \begin{subfigure}{0.38\textwidth}
    \centering    \includegraphics[width=\textwidth]{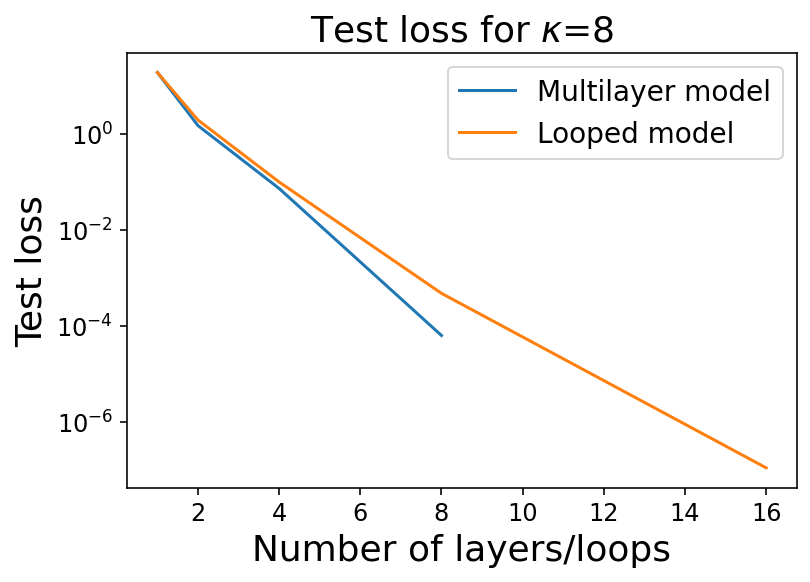}
    \label{fig:difficulty_multilayer_loop}
    \end{subfigure}\hfill
    \caption{\looseness-1 We evaluate the test loss of looped models and multilayer model as the function of loops and depth respectively. In (a), we plot the test loss as a function of number of layers for three different covariance ranges. As predicted by theory, the larger the range ($\kappa$), the more layers are needed to get a small loss. In (b) we observe that the number of loops required to solve the problem is very close to the number of layers required for multilayer model.}
    \label{fig:difficulty_depth}
    \vspace{-3mm}
\end{figure*}

\looseness-1In this section, we run experiments on the in-context learning linear regression problem to validate the theoretical results. In particular, we would like to demonstrate: (1) role of depth (in both multilayer and looped Transformers) with increasing task diversity and (2) robustness of Looped Transformer for out-of-domain generalization.
We use the codebase and experimental setup from \citep{ahn2024transformers} for all our linear regression experiments. In particular, we work with $d=10$ dimensional inputs and each input instance consists of $n=20$ pairs of $(x, y)$ in the context.
We train with $L$ attention layer models for multilayer training and 1 layer attention model looped $L$ times, as described in \Cref{subsec:modeling}. For simplicity we use the restricted linear attention setting. In all experiments, we follow the same data distribution as described in \Cref{sec:outofdist}.
For all experiments, the covariances are sampled from training and test distributions as follows: train distribution is $\Sigma = s I_{d}$, $s\sim \mathcal{D}_{\text{train}}$ and test distribution: $\Sigma = s I_{d}$, $s\sim \mathcal{D}_{\text{test}}$. In each experimental setting we will specify $\mathcal{D}_{\text{train}}$ and $\mathcal{D}_{\text{test}}$.

{\bf Role of Depth.} We demonstrate the importance of depth in presence of task diversity. Consider the following train and test distributions $\mathcal{D}_{\text{train}} = \mathcal{D}_{\text{test}} = \text{Unif}([1, \kappa])$.
Task diversity is controlled by varying  $\kappa \in [2, 4, 8]$, with higher value corresponding to more diverse tasks. In Figure~\ref{fig:difficulty_depth}, we see that that more diverse tasks (larger $\kappa$) require more layers (or loops) in order to achieve low loss. This aligns with the theoretical results from \Cref{sec:lower_bound_multilayer}. Furthermore, we find that looped model with $L$ loops can be very competitive to multilayer model with $L$ layers even in the most diverse setting.

\begin{figure*}[t]
    \centering
    \begin{subfigure}{0.33\textwidth}
\centering    
    \includegraphics[width=\textwidth]{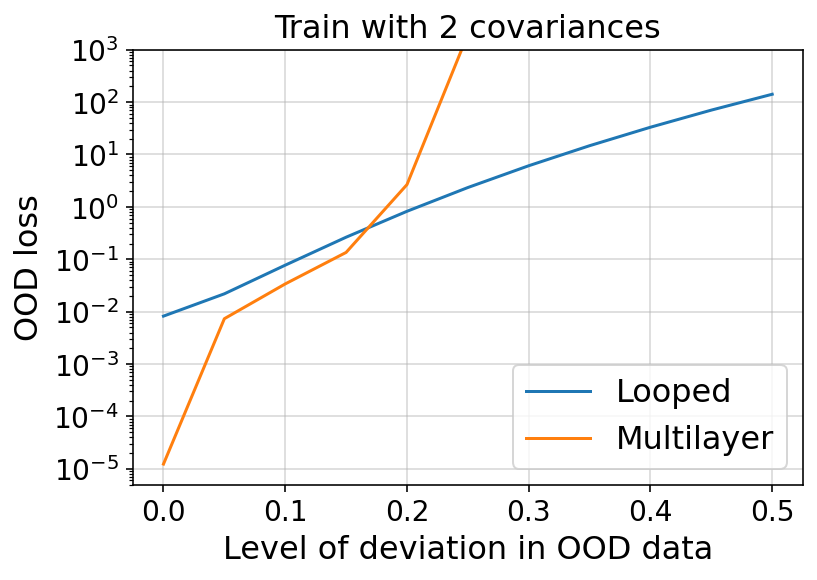}
    \label{fig:ood_cov2}
    \end{subfigure}\hfill
\centering
    \begin{subfigure}{0.33\textwidth}
    \centering    \includegraphics[width=\textwidth]{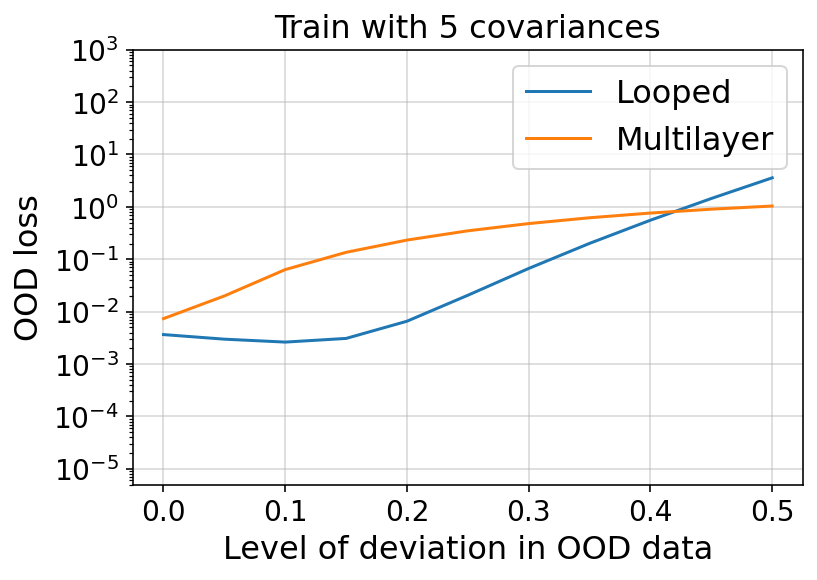}
    \label{fig:ood_cov5}
    \end{subfigure}\hfill
\centering
    \begin{subfigure}{0.33\textwidth}
    \centering    \includegraphics[width=\textwidth]{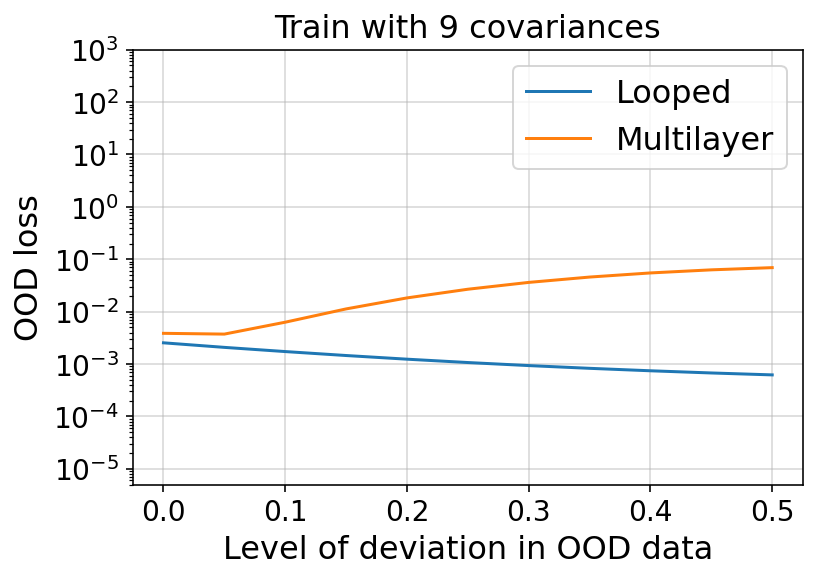}
    \label{fig:ood_cov9}
    \end{subfigure}\hfill
    \caption{\looseness-1We evaluate the robustness of looped models and multilayer model as the level of deviation of the test distribution from the train distribution increases. The three plots differ in the number of distinct covariances used in the train distribution. The test distribution is supported on a window of size $w$ around the train covariances, and $w$ is varied on the x-axis. We see that in all settings the looped model is more robust than the multilayer model, especially on the left plot where there is least diversity in the training distribution, as predicted by the theory.}
    \label{fig:ood_cov}
    \vspace{-4mm}
\end{figure*}

{\bf Out-of-distribution generalization.} 
For this set of experiments, we train models using a distribution comprising of $k$ covariances. The precise form the train and test distributions are:
\vspace{-2mm}
\begin{itemize}[leftmargin=0.5cm]
    \item Train distribution: $\mathcal{D}_{\text{train}} = \text{Unif}(\{s_1, \dots, s_k\})$ where $S_{k} = \{s_1, \dots, s_k\}$ is selected to be $k$ values uniformly spread out in the range $[1, 8]$.
    \item Test distribution: $\mathcal{D}_{\text{train}} = \text{Unif}(\cup_{i=1}^{k} [s_{i} - w, s_{i} + w])$ for some deviation $w\in \mathcal{R}$.
\end{itemize}
\vspace{-1mm}
Note that the train distribution is a mixture of $k$ tasks, whereas the test distribution has a slight deviation using a window size of $w$ around the training covariances.
For evaluations, $w$ is varied from 0, corresponding to in-distribution evaluation, to a max value of 0.5. 
In \Cref{fig:ood_cov}, we plot the robustness of multilayer and looped model for different training distributions corresponding to $k=2, 5$ and $9$. Consistent with our theory in \Cref{sec:outofdist}, we observe that Looped Transformers have much better out-of-distribution generalization compared to multilayer Transformers.
This also aligns with the empirical results from \citet{yang2023looped} on various incontext learning problems.
Furthermore, as the number of train covariances increase, the looped model becomes perfectly robust.

\vspace{-2mm}
\section{Conclusion}
\vspace{-2mm}

In this paper, we study a more realistic in-context setting than prior works --- task-diverse linear regression, which we believe is more reflective of the incontext abilities of Transformer based foundation models. For this setting, we provide lower bounds on the depth and show Transformers match these bounds; thereby, providing a comprehensive picture about number of layers required to solve the problem. While multilayer Transformers demonstrate this powerful representation power, we show they have weak out-of-distribution generalization, which gets exponentially worse with respect to depth. Surprisingly, we show the looped Transformers exhibit similar representative power with much better robustness.
We validate our theoretical findings through experiments on linear regression setting.
Finally, we touch upon aspects like adaptive depth and monotonicity of loss in layers, and we believe that these are very interesting future directions to pursue to understand the power of looped models.
While the incontext linear regression setting is quite simplified, it already provides a lot of interesting insights in the role of depth and task diversity. Extending this study to other settings, like reasoning, is a very interesting future direction.


\bibliography{references}
\bibliographystyle{plainnat}

\appendix

\newpage
\appendix
\section{Related work}
\textbf{Transformers in Task Diversity.}
 ~\cite{raventos2024pretraining} empirically study the relationship between task diversity in the pretraining distribution and the emergence of in-context learning abilities for linear regression. They observe an intriguing phase transition, where as task diversity increases, the learned model shifts from acting as a Bayesian estimator to implementing ridge regression.~\cite{garg2022can} investigate in-context learning across various tasks, where both the inputs and the link function for the in-context instances are sampled from an underlying distribution. For in-context linear regression,~\cite{garg2022can,akyurek2022learning,von2023transformers} further observe that transformers are capable of extrapolating to new test distributions, both in terms of the inputs and the regressor.~\cite{min2022rethinking} discover surprising latent task inference capabilities in language models; even when using random labels for in-context demonstrations (i.e., in the test distribution), the model can still produce accurate predictions, provided the input distribution in the test instances is similar to that of the pretraining data. The impact of task diversity has also been studied in the context of meta learning~\cite{yin2019meta,kumar2023effect,tripuraneni2021provable}.~\cite{kirsch2022general} empirically demonstrate the benefits of task diversity for in-context learning in classification tasks.~\cite{chan2022data} investigate which aspects of the pretraining distribution enhance in-context learning and find that a more bursty distribution, with potentially a large number of rare clusters, proves to be more effective than distributions that are closer to uniform.~\cite{raffel2020exploring,gao2020pile} explore the role of task diversity in the pretraining distribution for improvements in the context of transfer learning and fine-tuning on downstream tasks.

 \textbf{Transformers as Computational Models.}
 ~\cite{perez2021attention} Transformers are known to be powerful computational models,~\cite{perez2019turing} demonstrating their Turing-completeness.~\cite{giannou2023looped} propose that Looped Transformers can function as programmable computers without scaling the size of its architecture with the runtime logic such as program loops.~\cite{akyurek2022learning,von2023transformers,dai2022can} argue that transformers are expressive enough to implement optimization algorithms like gradient descent by internally forming an appropriate loss function.~\cite{garg2022can} empirically show that transformers can perform in-context learning across various learning tasks, while~\cite{allen2023physics} observe that transformers can learn and generate sentences in Context-free Grammers (CFGs) by utilizing dynamic programming. Other work focuses on how training algorithms for Transformers can recover weights that implement iterative algorithms, such as gradient descent, for in-context learning, assuming simplified distributions for in-context instances~\cite{ahn2024linear,zhang2024trained,mahankali2023one}. Notably, while most of these studies focus on the single-layer case, recent work by the authors in~\cite{ourwork} provides a global convergence result for training deep Looped Transformers.

\onecolumn
\section{Remaining Proofs}\label{sec:remainingproofs}


\begin{proof}[Proof of Lemma~\ref{lem:looprobust}]
    First, note that the global minimizer $A^*$ of $\losss$ should satisfy $\losss^{(P_{\alpha, \beta})}(A^*, 0) \leq \losss^{(P_{\alpha, \beta})}(A,0) \leq 1$. To see this, note that $\losss^{(P_{\alpha, \beta})}(A^*,0) \leq \losss^{(P_{\alpha, \beta})}(A,0)$ for $A = \beta^{-1} I$, as from the formula in Lemma~\ref{lem:matrixformat}:
    \begin{align*}
        \losss(A, 0)
        &= \E{\Sigma \sim P_{\alpha, \beta}}{\trace{(I - \Sigma^{1/2} A\Sigma^{1/2})^{2L}}}\\
        &= \E{\Sigma \sim P_{\alpha, \beta}}{\trace{( I - \beta^{-1}\Sigma)^{2L}}}.
    \end{align*}
    But since $\Sigma \leq \beta I$, we get $I - \beta^{-1}\Sigma \preccurlyeq I$, hence $( I - \beta^{-1}\Sigma) \preccurlyeq I$, which implies
    \begin{align}
        \losss(A^*) \leq d.\label{eq:optimalupperbound}
    \end{align}
    Now we show that if given this upper bound on the optimal loss, we should necessarily have
    \begin{align*}
        {\As}^{-1} \succcurlyeq \frac{(1-\delta)\beta}{2}(1  - \frac{\ln(1/\epsilon)}{L}) I.  
    \end{align*}
    Suppose this is not the case; Then, there is unit direction $v$ such that
    \begin{align}
        v^\top   {\As}^{-1} v <  \frac{(1-\delta)\beta}{2}(1  - \frac{\ln(1/\epsilon)}{L}). \label{eq:vineq}
    \end{align}
    Now from the $(\epsilon, \delta)$-right-spread-out assumption, with probability at least $\epsilon$ we have
    \begin{align}
        v^\top \Sigma v = v^\top X X^\top v \geq (1-\delta)\beta.\label{eq:vineqtwo}
    \end{align}
    Now we claim that under this event we should necessarily have 
    \begin{align*}
        \norm{I - \Sigma^{1/2}\As\Sigma^{1/2}} > 1 + \frac{\ln(d/\epsilon)}{L}.
    \end{align*}
    Suppose this is not the case. Then
    \begin{align*}
        \Sigma^{1/2} \As \Sigma^{1/2} \leq (2+\frac{\ln(d/\epsilon)}{L})I.
    \end{align*}
    But this implies
    \begin{align*}
        (2+\frac{\ln(d/\epsilon)}{L}){\As}^{-1} \geq \Sigma,
    \end{align*}
    which gives
    \begin{align*}
        (2+\frac{\ln(d/\epsilon)}{L}) v^\top {\As}^{-1}v \geq v^\top \Sigma v.
    \end{align*}
    But this contradicts Equations~\eqref{eq:vineq} and~\eqref{eq:vineqtwo} as
    \begin{align*}
        (2+\frac{\ln(d/\epsilon)}{L}) (1-\frac{\ln(d/\epsilon)}{L})(1-\delta)\beta = (1 - \frac{\ln(d/\epsilon)}{L} -\pa{\frac{\ln(d/\epsilon)}{L}}^2)(1-\delta)\beta < (1-\delta)\beta. 
    \end{align*}
    Hence, we should have 
    \begin{align*}
        \norm{I - \Sigma^{1/2}\As\Sigma^{1/2}} > 1 + \frac{\ln(d/\epsilon)}{L}.
    \end{align*}
    But this means that under this event, which happens with probability at least $\epsilon$ under $P_{\alpha, \beta}$, we should also have
    \begin{align*}
        \pa{I - \Sigma^{1/2}\As\Sigma^{1/2}}^{2L} \geq \pa{1 + \frac{\ln(d/\epsilon)}{L}}^L > \frac{d}{\epsilon}.
    \end{align*}
    This implies 
    \begin{align*}
        \losss(A^*, 0) &= \E{\Sigma \sim P_{\alpha, \beta}}{\trace{(I - \Sigma^{1/2} \As\Sigma^{1/2})^{2L}}}\\
        &\geq \Pr(v^\top \Sigma v \geq (1-\delta)\beta) \times \frac{d}{\epsilon} > d,
    \end{align*}
    which contradicts Equation~\eqref{eq:optimalupperbound}. Therefore, we should have
    \begin{align}
        {A^*}^{-1} \succcurlyeq \frac{(1-\delta)\beta}{2}(1  - \frac{\ln(d/\epsilon)}{L}) I.\label{eq:firstarg}  
    \end{align}
    Now note that for every $\Sigma$ sampled from $P_{\alpha, (1\delta')\beta}$, according to definition we have
    \begin{align*}
        \Sigma \preccurlyeq (1-\delta')\beta I, 
    \end{align*}
    which combined with Equation~\eqref{eq:firstarg} implies 
    \begin{align*}
        \Sigma^{1/2}A^* \Sigma^{1/2} &\leq \frac{(1-\delta')\beta}{\frac{(1-\delta)\beta}{2}(1  - \frac{\ln(d/\epsilon')}{L})} = \\
        & \leq 2(1-\frac{\ln(d/\epsilon)}{L}).
    \end{align*}
    The last inequality is because 
    \begin{align*}
        1-\delta' = 1 - \delta - 2\frac{\ln(d/\epsilon')}{L} \leq 1 - \delta - 2(1-\delta)\frac{\ln(d/\epsilon')}{L} + (1-\delta) \pa{\frac{\ln(d/\epsilon')}{L}}^2 = \pa{1-\delta}\pa{1-\frac{\ln(d/\epsilon')}{L}}^2. 
    \end{align*}
    On the other hand, note that for any $\Sigma$ in the support of $P_{\alpha, (1-\delta')\beta}$, we have 
\begin{align*}
    \alpha I \preccurlyeq \Sigma.
\end{align*}

Therefore, 
\begin{align*}
    \pa{\frac{\alpha}{\beta} - 1} I \preccurlyeq \Sigma^{1/2}A^* \Sigma^{1/2} - I\preccurlyeq \pa{1 - \frac{2\ln(d/\epsilon')}{L}} I,
\end{align*}
which implies
\begin{align*}
    \norm{\pa{I - \Sigma^{1/2}A^* \Sigma^{1/2}}^{2L}}
    \leq \pa{1 - \pa{\frac{2\ln(d/\epsilon')}{L} \wedge \frac{\alpha}{\beta}}}^{2L} \leq \frac{\epsilon'}{d} \vee \pa{1 - \frac{\alpha}{\beta}}^L.
\end{align*}
Therefore
\begin{align*}
     \losss^{(\tilde P_{\alpha, (1-\delta')\beta})}(\As, 0)
        &= \E{\Sigma \sim P_{\alpha, \beta}}{\trace{(I - \Sigma^{1/2} \As\Sigma^{1/2})^{2L}}} \leq \epsilon' \vee d\pa{1 - \frac{\alpha}{\beta}}^L.
\end{align*}
\end{proof}

\begin{proof}[Proof of~\Cref{thm:gdrestricted}]\label{proof:theorem5}
    As described in Appendix A.1 in~\cite{von2023transformers}, we can construct one iteration of gradient descent for the linear regression quadratic loss in the restricted attention case, which takes the following form:
    \begin{align*}
    &{y^{(t+1)}}^\top = {y^{(t)}}^\top - \eta{y^{(t)}}^\top X^\top X,\\
    &y^{(t+1)}_q = y^{(t)}_q 
    - \eta{y^{(t)}}^\top X^\top x_{q}.
    \end{align*}
    Then, taking the step size $\eta = \frac{1}{2\beta}$, for the linear regression error we get
    \begin{align*}
        |\TFdd{\Zz[0]} - y| = {w^*}^\top \Big(I - \eta (XX^\top)\Big)^L x_q \leq \|w^*\|\|x_q\| (1-\frac{\alpha}{2\beta})^L,
    \end{align*}
    which completes the proof.
\end{proof}

\section{Lower bound}

\subsection{A formula for the loss in the multilayer case}

\begin{lemma}\label{lem:sigmoid1}[Output with non-linearity]
    We have the following relations 
    \begin{align*}
        &{y^{(t+1)}}^\top = {y}^\top\prod_{i=0}^t \Big(I - \frac{1}{n} \sigma\Big(X^\top \Aa[i]X\Big)\Big),\\
        &y^{(t+1)}_q = \sum_{j=1}^{t-1} y^\top \Big(I - \prod_{i=0}^{j-1}\Big(I - \frac{1}{n}\sigma\Big(X^\top \Aa[i]X\Big)\Big)\Big) \sigma\Big(X^\top \Aa[j] x_{q}\Big) + \prod_{i=0}^{t-1}\Big(I - \frac{1}{n}\sigma\Big(X^\top \Aa[i]X\Big)\Big) \sigma\Big(X^\top \Aa[t] x_{q}\Big).
    \end{align*}
\end{lemma}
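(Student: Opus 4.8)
The plan is to prove Lemma~\ref{lem:sigmoid1} by induction on $t$, unrolling the recursions for $y^{(t)}$ and $y^{(t)}_q$ that were derived from the restricted-attention update rule in Equation~\ref{eq:tfthree} (with $\dd[t]=0$, so the $\dd^\top X$ terms drop out). Recall those recursions read
\begin{align*}
    &{y^{(t+1)}}^\top = {y^{(t)}}^\top - \tfrac{1}{n}{y^{(t)}}^\top \sigma\Big(X^\top \Aa[t] X\Big),\\
    &y^{(t+1)}_q = y^{(t)}_q - \tfrac{1}{n}{y^{(t)}}^\top \sigma\Big(X^\top \Aa[t] x_{q}\Big).
\end{align*}
The first identity is the easy half: factoring ${y^{(t+1)}}^\top = {y^{(t)}}^\top\big(I - \tfrac1n\sigma(X^\top \Aa[t]X)\big)$ and applying the inductive hypothesis ${y^{(t)}}^\top = y^\top\prod_{i=0}^{t-1}\big(I - \tfrac1n\sigma(X^\top \Aa[i]X)\big)$ immediately gives the product formula. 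So the main work is the second identity.

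For the query coordinate, I would first establish the closed form for ${y^{(t)}}^\top$ (the first bullet), then substitute it into the recursion $y^{(t+1)}_q = y^{(t)}_q - \tfrac1n {y^{(t)}}^\top\sigma(X^\top \Aa[t]x_q)$ and telescope. Concretely, summing the recursion from $0$ to $t$ gives $y^{(t+1)}_q = -\tfrac1n\sum_{j=0}^{t}{y^{(j)}}^\top\sigma(X^\top \Aa[j]x_q)$ (using $y^{(0)}_q=0$), and then plugging in ${y^{(j)}}^\top = y^\top\prod_{i=0}^{j-1}\big(I-\tfrac1n\sigma(X^\top \Aa[i]X)\big)$ yields a sum of terms $-\tfrac1n y^\top\prod_{i=0}^{j-1}\big(I-\tfrac1n\sigma(X^\top \Aa[i]X)\big)\sigma(X^\top \Aa[j]x_q)$. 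The remaining step is to check this matches the stated expression, which separates the $j=t$ term and rewrites each earlier term's coefficient as $y^\top\big(I - \prod_{i=0}^{j-1}(I-\tfrac1n\sigma(X^\top \Aa[i]X))\big)$ rather than $-\tfrac1n y^\top\prod(\cdots)$ — this presumably comes from a further telescoping/reindexing where $I - \prod_{i=0}^{j-1}(\cdots) = \sum_{k=0}^{j-1}\tfrac1n\prod_{i=0}^{k-1}(\cdots)\sigma(X^\top \Aa[k]X)\cdot(\text{something})$; I would verify the algebraic identity $\sum_{j}(-\tfrac1n)\prod_{i=0}^{j-1}(\cdots)\sigma(X^\top \Aa[j]x_q)$ can be regrouped into the claimed form, most cleanly by doing the induction step directly on the claimed formula rather than deriving it from scratch.

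So the cleanest route is a straight induction on the second bullet: assume the formula for $y^{(t)}_q$ (and for ${y^{(t)}}^\top$), apply the one-step update, and collect terms. The new $j=t$ term $\prod_{i=0}^{t-1}(I-\tfrac1n\sigma(X^\top \Aa[i]X))\sigma(X^\top \Aa[t]x_q)$ should appear, the old explicit $j=t-1$-type top term must get absorbed into the sum with its coefficient upgraded from a bare product to $I$ minus a product, and the subtraction of $\tfrac1n{y^{(t)}}^\top\sigma(X^\top \Aa[t]x_q)$ must account for the discrepancy; matching the two expressions reduces to the scalar/matrix identity $\big(I - \prod_{i=0}^{t-1}(I-\tfrac1n N_i)\big) + \tfrac1n\prod_{i=0}^{t-1}(I-\tfrac1n N_i)\cdot(\text{?})$ — really just the telescoping $\prod_{i=0}^{t}(I-\tfrac1n N_i) = \prod_{i=0}^{t-1}(I-\tfrac1n N_i) - \tfrac1n\prod_{i=0}^{t-1}(I-\tfrac1n N_i)N_t$ applied with $N_i = \sigma(X^\top \Aa[i]X)$.

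The main obstacle is purely bookkeeping: getting the index ranges and the off-by-one in the product/sum conventions exactly right (note the stated formula has the sum running to $t-1$ while the isolated term uses $\prod_{i=0}^{t-1}$ and $\Aa[t]$, which must be reconciled with the left-hand side $y^{(t+1)}_q$), and confirming that the coefficient of the $j$-th term is indeed $y^\top\big(I - \prod_{i=0}^{j-1}(I-\tfrac1n\sigma(X^\top \Aa[i]X))\big)$ and not something off by one in the upper limit of the inner product. I do not expect any genuine mathematical difficulty beyond carefully tracking these indices and using $XX^\top = \Sigma$ only implicitly (it is not even needed here since everything stays in terms of $X^\top \Aa[i]X$). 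I would write the induction base case $t=0$ explicitly (${y^{(1)}}^\top = y^\top(I-\tfrac1n\sigma(X^\top \Aa[0]X))$ and $y^{(1)}_q = \tfrac1n$-term, matching with the empty sum convention), then present the inductive step as above.
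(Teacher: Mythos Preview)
Your proposal is correct and follows essentially the same approach as the paper: the paper's proof is a terse induction on $t$, factoring the $y^{(t+1)}$ recursion to get the product formula and then substituting into the $y^{(t+1)}_q$ recursion to obtain the second identity. Your more careful bookkeeping (telescoping the $y_q$ recursion, isolating the $j=t$ term, and flagging the index-range and missing-$y^\top$ quirks in the stated formula) is warranted, since the paper's argument for the second equation essentially just writes down the update and asserts the result.
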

\begin{proof}
    Assuming the formula for $y^{(t)}$ and from the update equation for $y^{(t+1)}$, 
    \begin{align*}
        {y^{(t+1)}}^\top &= {y^{(t)}}^\top - \frac{1}{n}{y^{(t)}}^\top \sigma\Big(X^\top \Aa[t] X\Big)\\
        &= {y^{(t)}}^\top \Big(I - \frac{1}{n} \sigma\Big(X^\top \Aa[t]X\Big)\Big)\\
        &= {y}^\top\prod_{i=0}^t \Big(I - \frac{1}{n} \sigma\Big(X^\top \Aa[i]X\Big)\Big).
    \end{align*}

    Therefore, for $y^{(t)}_q$ we have 
    \begin{align*}
        y^{(t+1)}_q &= y^{(t)}_q 
 + \frac{1}{n}{y^{(t)}}^\top \sigma\Big(X^\top \Aa[t] x_{q}\Big)\\
        &=\sum_{j=1}^{t-1} y^\top \Big(I - \prod_{i=0}^{j-1}\Big(I - \frac{1}{n}\sigma\Big(X^\top \Aa[i]X\Big)\Big)\Big) \sigma\Big(X^\top \Aa[j] x_{q}\Big) + \prod_{i=0}^{t-1}\Big(I - \frac{1}{n}\sigma\Big(X^\top \Aa[i]X\Big)\Big) \sigma\Big(X^\top \Aa[t] x_{q}\Big).
    \end{align*}

\end{proof}
where recall $\Sigma = \frac{1}{n}\sum_{i=1}^n x_i {x_i}^\top$. Now we substitute $\Sigma = $


\begin{lemma}[Polynomial blow up]\label{lem:polylowerbound}
    Suppose $P(\gamma)$ is a polynomial of degree $k$ with $P(0) = 0$. Then, given $k \leq \frac{\sqrt{\lambda_{max}}}{6\sqrt{\lambda_{min}}}$, for an arbitrary choice of $\alpha \in \mathbb R$, there exists a choice of $\tilde \gamma\in (\lambda_{min}, \lambda_{max})$ such that
    \begin{align*}
        \frac{|P(\tilde \gamma) - \alpha|}{|\alpha|} \geq \frac{1}{4}.
    \end{align*}
Furthermore, there exists an interval $(a, b)$ of length at least $\frac{\lambda_{\max} - \lambda_{\min}}{64k^2}$ such that for all $\gamma \in (a, b)$:
    \begin{align*}
        \frac{|P(\gamma) - \alpha|}{|\alpha|} \geq \frac{1}{8}.
    \end{align*}
\end{lemma}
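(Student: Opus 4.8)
The plan is to exploit the fact that a degree-$k$ polynomial vanishing at $0$ cannot stay close to a nonzero constant $\alpha$ over a long interval, because the normalized polynomial $P(\gamma)/\alpha$ would then be a degree-$k$ polynomial that equals $1$ at many points (in an approximate sense) while equalling $0$ at $\gamma=0$; this forces it to oscillate or grow, and Chebyshev-type extremal bounds quantify how fast. Concretely, suppose toward a contradiction that $|P(\gamma)-\alpha| < \tfrac14 |\alpha|$ for all $\gamma \in (\lambda_{\min},\lambda_{\max})$. Then $Q(\gamma) := P(\gamma)/\alpha$ is a degree-$\le k$ polynomial with $Q(0)=0$ and $|Q(\gamma)-1| < \tfrac14$, hence $\tfrac34 < Q(\gamma) < \tfrac54$ on all of $(\lambda_{\min},\lambda_{\max})$. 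So $|Q|$ is bounded by $\tfrac54$ on that interval, yet $Q(0)=0$, i.e. $Q$ drops by at least $\tfrac34$ between the interval $(\lambda_{\min},\lambda_{\max})$ and the point $0$. The growth of a polynomial outside an interval on which it is bounded is controlled by the Chebyshev polynomial: if $|Q| \le M$ on $[\lambda_{\min},\lambda_{\max}]$ then for a point at distance comparable to $\lambda_{\min}$ to the left of the interval (here $0$, which is at distance $\lambda_{\min}$ from the left endpoint, while the interval has length $\lambda_{\max}-\lambda_{\min}$), one has $|Q(0)| \le M \cdot |T_k(1 + 2\lambda_{\min}/(\lambda_{\max}-\lambda_{\min}))|$, and this Chebyshev value, for $k \le \tfrac{1}{6}\sqrt{\lambda_{\max}/\lambda_{\min}}$, is strictly less than... wait — that bounds $|Q(0)|$ from above and gives no contradiction with $Q(0)=0$.

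So the correct direction is the reverse: I want to use that $Q$ is \emph{large somewhere} forced by $Q(0)=0$ and $Q$ near $1$ on the interval, which is not automatic. Instead the cleaner route is: let $Q(\gamma) = P(\gamma)/\alpha$ and consider $R(\gamma) := 1 - Q(\gamma)$, a degree-$\le k$ polynomial with $R(0) = 1$. If $|P(\gamma)-\alpha| < \tfrac14|\alpha|$ on the whole interval, then $|R(\gamma)| < \tfrac14$ on $(\lambda_{\min},\lambda_{\max})$. Now apply the Chebyshev growth estimate to $R$: a degree-$k$ polynomial bounded by $\tfrac14$ on an interval $I$ of length $\ell = \lambda_{\max}-\lambda_{\min}$ satisfies, at a point $0$ lying at distance $\lambda_{\min}$ outside $I$, the bound $|R(0)| \le \tfrac14\, |T_k(1 + 2\lambda_{\min}/\ell)|$. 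Since $1 + 2\lambda_{\min}/\ell = 1 + 2\lambda_{\min}/(\lambda_{\max}-\lambda_{\min}) \le 1 + 2\lambda_{\min}/(\lambda_{\max}-\lambda_{\min})$, and using $|T_k(1+x)| \le \cosh(k\sqrt{2x+x^2}) \le \exp(k\sqrt{2x})$ for $x \ge 0$ small, with $x = 2\lambda_{\min}/(\lambda_{\max}-\lambda_{\min}) \le 4\lambda_{\min}/\lambda_{\max}$ (when $\lambda_{\max} \ge 2\lambda_{\min}$, say; the small-$k$ hypothesis handles the other case trivially), we get $|R(0)| \le \tfrac14 \exp(k \cdot 2\sqrt{2\lambda_{\min}/\lambda_{\max}} \cdot \sqrt{2})$. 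Under the hypothesis $k \le \tfrac{1}{6}\sqrt{\lambda_{\max}/\lambda_{\min}}$ the exponent is at most $\tfrac{4}{6}\sqrt{2} < 1$, so $|R(0)| < \tfrac14 e < 1 = |R(0)|$, a contradiction. This proves the first claim (with $\tfrac14$).

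For the second claim, I would bootstrap from the first: having found $\tilde\gamma$ with $|P(\tilde\gamma)-\alpha| \ge \tfrac14|\alpha|$, I use that $P(\gamma)-\alpha$ has derivative bounded in terms of its sup-norm on the interval via a Markov–Bernstein inequality, $\|P'\|_{\infty,I} \le \tfrac{2k^2}{\ell}\|P-\alpha\|_{\infty,I}$, together with an a priori bound on $\|P-\alpha\|_{\infty,I}$ (which is at most, say, $O(|\alpha|)$ on the region where $|P-\alpha| \le \tfrac{?}{}$; alternatively just run the argument locally). Then $P(\gamma)-\alpha$ cannot drop from magnitude $\ge \tfrac14|\alpha|$ down below $\tfrac18|\alpha|$ within a $\gamma$-window of length $\tfrac{|\alpha|/8}{\|P'\|_{\infty,I}} \ge \tfrac{|\alpha|/8}{2k^2 \|P-\alpha\|/\ell}$; choosing the sup-norm bound $\|P-\alpha\|_{\infty,I} = O(|\alpha|)$ yields a window of length $\gtrsim \ell/k^2$, matching the claimed $(\lambda_{\max}-\lambda_{\min})/(64k^2)$ after tracking constants. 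The main obstacle is making the constants in the Chebyshev growth estimate and the Markov inequality line up cleanly with the stated $\tfrac16$, $\tfrac14$, $\tfrac18$, $64$; in particular, for the second part one must either establish the needed a priori $\|P-\alpha\|_{\infty,I} = O(|\alpha|)$ bound (perhaps only knowing it where $P$ stays near $\alpha$, and handling the complementary region by noting $|P-\alpha|$ is already large there) or restructure the argument to compare $P$ at $\tilde\gamma$ with nearby points directly. I expect the Chebyshev comparison at the external point $0$ to be the conceptual crux; the rest is careful constant-chasing.
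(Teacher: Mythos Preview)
Your argument for the first claim is correct. You invoke the Chebyshev extremal growth bound directly on $R(\gamma) := 1 - P(\gamma)/\alpha$ (bounded by $\tfrac14$ on the interval under the contradictory hypothesis, yet equal to $1$ at the exterior point $0$), whereas the paper builds the scaled Chebyshev polynomial $Q_k(\gamma) = T_k\bigl(\tfrac{2\gamma-(\lambda_{\min}+\lambda_{\max})}{\lambda_{\max}-\lambda_{\min}}\bigr)$ explicitly, verifies $|Q_k(0)|\le 4$ under the hypothesis on $k$, and then compares $R$ with $Q_k/Q_k(0)$ via root counting: the difference $S = R - Q_k/Q_k(0)$ satisfies $S(0)=0$ and alternates sign at the $k+1$ Chebyshev extrema (since $|R|<\tfrac14$ there while $|Q_k/Q_k(0)|\geq \tfrac14$), forcing $k+1$ roots of a degree-$k$ polynomial. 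These are two packagings of the same extremal property, and either works for Part~1.

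For the second claim, however, your Markov--Bernstein route has a genuine gap, not merely loose constants. Markov bounds $\|P'\|_{\infty,I}$ by $\tfrac{2k^2}{\lambda_{\max}-\lambda_{\min}}\,\|P-\alpha\|_{\infty,I}$, but nothing in the hypotheses controls $\|P-\alpha\|_{\infty,I}$: the polynomial $P$ may be arbitrarily large on parts of the interval, so the window length $\gtrsim (\lambda_{\max}-\lambda_{\min})\,|\alpha|/(k^2\|P-\alpha\|_{\infty,I})$ you derive can be arbitrarily short. Your suggested fallback (``the complementary region already has $|P-\alpha|$ large'') only yields a large \emph{set}, not a long \emph{interval}, since $\{|P-\alpha|\geq |\alpha|/8\}$ can be a union of order-$k$ short components. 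The paper sidesteps this entirely by reusing its root-counting framework: it identifies, around each of the $k$ Chebyshev extrema, an explicit sub-interval $(a_i,b_i)\subset(\lambda_{\min},\lambda_{\max})$ of length $\gtrsim (\lambda_{\max}-\lambda_{\min})/k^2$ on which $|Q_k/Q_k(0)|\geq \tfrac18$. If \emph{every} $(a_i,b_i)$ contained a point $\gamma_i$ with $|R(\gamma_i)|<\tfrac18$, then $S=R-Q_k/Q_k(0)$ would again alternate sign at the $\gamma_i$'s and vanish at $0$, giving $k+1$ roots. Hence some $(a_i,b_i)$ has $|R|\geq\tfrac18$ throughout, and that is the desired interval. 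The advantage of the paper's approach is precisely that the same oscillation/root-counting device handles both the pointwise and the interval statements uniformly, with no need for any a priori bound on $\|P-\alpha\|_{\infty}$.
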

\begin{proof}
    Let $\kappa = \frac{\lambda_{min}}{\lambda_{max}}$. Consider the following scaling of the $k$th Chebyshev polynomial $T_k(\gamma)$:
    \begin{align*}
        Q_k(\gamma) = T_k(\frac{2\gamma - (\lambda_{min} + \lambda_{max})}{\lambda_{max} - \lambda_{min}}).
    \end{align*}
    Now from the fact that $T_k(x) = (x + \sqrt{x^2 - 1})^k + (x - \sqrt{x^2 - 1})^k$ outside the interval $[-1,1]$, we get for $x = -1 - \alpha$:
    \begin{align*}
      (1 + \sqrt{\alpha})^k \leq |T_k(x)| \leq 1 + (1 + 3\sqrt{\alpha})^{k}.
    \end{align*}
    Therefore, for $\kappa = \lambda_{min}/\lambda_{max}$ we have
    \begin{align*}
        Q_k(0) = T_k(1 - \frac{2\lambda_{min}}{\lambda_{max} - \lambda_{min}})
        \leq 1 + (1 + 3\sqrt{4\kappa})^k,
    \end{align*}
    and using the fact that $k \leq 1/(6\sqrt{\kappa})$, we have $Q_k(0) \leq 4$. On the other hand, from the property of Chebyshev polynomials, we get that $Q_k(0)$ alternates between $-1$ and $1$ $k$ times. This means that if we define $R(\gamma) = Q_k(\gamma)/Q_k(0)$, we have that $R(0) = 1$, and that $R$ alternates above and below the $y = \frac{1}{4}$ and $y = -\frac{1}{4}$ lines $k$ times. Now suppose the absolute value of $P(\gamma)$ is always within the interval $(\alpha - \alpha/4, \alpha + \alpha/4)$. Then for the polynomial $\tilde P(\gamma) = 1 - P(\gamma)/\alpha$ we have $\tilde P(0) = 1$ and $\tilde P(\gamma) < \frac{1}{4}$ for all points $\gamma$ in the interval $[\lambda_{min}, \lambda_{max}]$. Therefore, if we consider the polynomial $S(\gamma) = \tilde P(\gamma) - R(\gamma)$, then $S$ has degree at most $k$, $S(0) = 0$, and $S(\gamma)$ alternates $t$ times between positive and negative numbers in the interval $[\lambda_{min}, \lambda_{max}]$. But this means that $S$ has at least $t+1$ real roots. The contradiction finishes the proof for the first part.
    \\
    \textbf{Proof for the second part.}
    To show the second part, note that the $k$th Chebyshev polynomial $T_k(x)$ is lipschitz around its roots; In particular, consider the trigonomic property $T_k(\cos(\theta)) = \cos(k\theta)$, now defining $x(\theta) = \cos(\theta)$, we see that
    
    \begin{align*}
       \frac{d}{dx}T_k(x(\theta)) &= \frac{d}{d\theta} T_k(x(\theta)) \frac{d\theta}{dx}\\
        &=\frac{d}{d\theta} \cos(k\theta) \Big(\frac{dx}{d\theta}\Big)^{-1}\\
        &= k \sin(k\theta)/\sin(\theta). 
    \end{align*}
    Namely, the extremal points of $T_K(\cos(\theta))$ are at $\theta = \frac{i\pi}{k}$ for $i=0,\dots, k-1$. Now for $|\Delta \theta| \leq \frac{\pi}{4k}$,
        \begin{align*}
            T_k(\cos(i\frac{\pi}{k} + \Delta \theta)) = \cos(\pi k + k\Delta)\geq \cos(\frac{\pi}{4}) = \frac{1}{\sqrt 2} \geq \frac{1}{2}.
        \end{align*}
        Therefore, we conclude that the $i$th root $\cos(\frac{\pi i}{k})$ of $T_k$, for all $x$ in the interval $(\cos(\frac{\pi i}{k} - \frac{\pi}{4k}), \cos(\frac{\pi i}{k} + \frac{\pi}{4k}))$ we have
        \begin{align*}
            T_k(x) \geq \frac{1}{2}.
        \end{align*}
        Moreover, it is easy to see that the length of this interval is miaximized for $i = 0$ with length $2(1 - \cos(\frac{\pi}{k}))$. Therefore,
        for all $0\leq i < k$, if we define the intervals 
        \begin{align*}
            &(a_i, b_i) \\
            &= (\cos(\frac{\pi i}{k} - \frac{\pi}{4k})(\frac{\lambda_{\max} - \lambda_{\min}}{2}) + \frac{\lambda_{\min} + \lambda_{\max}}{2}, \cos(\frac{\pi i}{k} + \frac{\pi}{4k})(\frac{\lambda_{\max} - \lambda_{\min}}{2}) + \frac{\lambda_{\min} + \lambda_{\max}}{2}),
        \end{align*}
        then the scaled chebyshev polynomials $Q_k$ that we defined for the previous part have the property that for all $0 \leq i < k$ and $\gamma \in (a_i, b_i)$, $Q_k(\gamma) \geq \frac{1}{2}$, and therefore $R(\gamma) \geq \frac{1}{8}$. Now suppose for every $0\leq i < k$, there exists $\gamma_i \in (a_i, b_i)$ such that $P(\gamma_i) < \frac{1}{8}$. Then, for the polynomial $S(\gamma) = \tilde P(\gamma) - R(\gamma)$ we see that again it alternates $t$ times between positive and negative at $\gamma_i$'s, hence has at least $k+1$ roots, which is again a contradiction.  Therefore, we conlcude that there exists  $0\leq i < k$ such taht for all $\gamma \in (a_i, b_i)$:
       \begin{align*}
        \frac{|P(\tilde \gamma) - \alpha|}{|\alpha|} \geq \frac{1}{8}.
    \end{align*}
        Finally, note that the length of the intervals $(a_i, b_i)$ is minimized for $i = 0$, in which case the legnth is 
        \begin{align*}
            (1-\cos(\frac{\pi}{4k}))(\lambda_{\max} - \lambda_{\min}) \geq \big|1-\sqrt{1-\frac{\pi^2}{16k^2}}(\lambda_{\max} - \lambda_{\min})\big| \geq\frac{\pi^2}{64k^2} (\lambda_{\max} - \lambda_{\min}).
        \end{align*}
\end{proof}

\subsection{Proof of Theorem~\ref{thm:scalinglowerbound}}
\begin{proof}[Proof of Theorem~\ref{thm:scalinglowerbound}]
    First we consider the case when the weight matrices are restricted to $\Aa[t],\dd[t]$ as in~\eqref{eq:tfthree}. We substitute $X = \gamma U$ and $x_q = \theta x$ with parameters $\gamma, \theta$ and arbitrary fixed matrix $U$ and vector $x$. The key idea is to look at $y^{(t)}_q$ as a polynomial of $\gamma$ and $\theta$, $P_t(\gamma)$, where the degree of $P_t(\gamma)$ is at most $2t$ and $P_t(0) = 0$.  

    We show this by induction. In particular, we strengthen the argument by also proving that each entry of $y^{(t)}$ is a polynomial of degree at most $2t+1$ of $\gamma$. The base of induction is clear since $y_q^{(0)} = 0, y^{(0)} = y$ are both degree zero polynomials of $\gamma$. For the step of induction, suppose the argument holds for $t$ and we want to prove it for $t+1$. We write the update rule for $y^{(t)}_q$:
    \begin{align}
        y^{(t+1)}_q &= y^{(t)}_q 
 - \tfrac{1}{n}({y^{(t)}}^\top + {\dd}^\top X) \sigma\Big(X^\top \Aa[t] x_{q}\Big).\label{eq:secondpartt}
    \end{align}
    Now from the hypothesis of induction, we know that $y^{(t)}_q$ is represented by a polynomial of degrees at most $2t$ of $\gamma$. 
    On the other hand, note that from the positive scaling property of ReLU
    \begin{align*}
        \sigma\Big(X^\top A^{(t)} x_q\Big)
        = \sigma\Big(\gamma U^\top A^{(t)} x_q\Big) = \gamma\sigma\Big( U^\top A^{(t)} x_q\Big).
    \end{align*}
    Therefore,
    the second part of the second term in~\eqref{eq:secondpartt}, i.e. ${u^{(t)}}^\top X\sigma\Big(X^\top A^{(t)} x_q\Big)$ has degree exactly $2$ in $\gamma$. On the other hand, from the hypothesis of induction we know that each entry of $y^{(t)}$ is a polynomial of degree at most $2t+1$. Therefore,
    the first part of the second term,  $\frac{1}{n}{y^{(t)}}^\top \sigma\Big(X^\top \Aa[t] x_q\Big)$, has degree at most $2t+2$. Hence, overall $y^{(t+1)}_q$ is a polynomial of degree at most $2t+2$ of $\gamma$. 
    Next, we show the step of induction for $y^{(t+1)}$. We have the update rule
    \begin{align*}
       {y^{(t+1)}}^\top = {y^{(t)}}^\top - \tfrac{1}{n}({y^{(t)}}^\top + {u^{(t)}}^\top X)\sigma\Big(X^\top \Aa[t] X\Big). 
    \end{align*}
    Now again from the scale property of the activation
    \begin{align*}
        \sigma\Big(X^\top \Aa[t] X\Big)
        &= \sigma\Big((\gamma U)^\top \Aa[t] (\gamma U)\Big)
        = \gamma^2 \sigma\Big(U^\top \Aa[t] U\Big). 
    \end{align*}
    Hence, from the hypothesis of induction, $y^{(t)}$ is a polynomial of degree at most $2t+1$, ${y^{(t)}}^\top \sigma\Big(X^\top \Aa[t] X\Big)$ is of degree at most $2t+3$, and ${u^{(t)}}^\top X\sigma \Big(X^\top \Aa[t] X\Big)$ is of degree at most three. Overall we conclude that $y^{(t+1)}$ is of degree at most $2t+3$ which finishes the step of induction.     
    The result then follows from Lemma~\ref{lem:polylowerbound}.

    Next, we show the second argument without the weight restriction~\eqref{eq:tfthree} with a similar strategy, namely we substitute $X = \gamma U$ and bound the degree of polynomial that entries of $Z^{(t)}$ are of variable $\gamma$. In particular we claim that each entry of $Z^{(t)}$ is a polynomial of degree at most $3^t$ of $\gamma$. The base of induction is trivial as $\Zz[1]$ is a polynomial of degree at most $1$ of $\gamma$. Now for step of induction, given that $\Zz[t]$ is a polynomial of degree at most $3^t$, in light of the recursive relation~\eqref{eq:tftwo},  $\Zz[t]$ and the one-homogeneity of ReLU, we get that $\Zz[t+1]$ is of degree at most $3^{t+1}$, proving the step of induction. The result follows again from Lemma~\ref{lem:polylowerbound}.
\end{proof}

\subsection{Proof of Theorem~\ref{thm:matchingupper}}\label{sec:upperboundproof}
\begin{proof}[Proof of Theorem~\ref{thm:matchingupper}]
    Using Lemma~\ref{lem:matrixformat}, setting $\dd[i]= 0, \forall i\leq L$ we have
    \begin{align*}
        &|\TF{\Zz[0]} - {w^*}^\top x_q|\\ 
        &= |y_q - {w^*}^\top x_q|\\
        &= |{w^*}^\top\prod_{i=0}^{L-1}(I - \Sigma \Aa[i])x_q|.
    \end{align*}
    Now pick $\Aa[i] = \theta_i^{-1}I$, where $\theta_i = \pa{\frac{\beta - \alpha}{2}\lambda_i + \frac{\beta + \alpha}{2}}^{-1}$ is the $i$th root of the Chebyshev polynomial scaled into the interval $(\alpha, \beta)$. This way
    \begin{align*}
        \prod_{i=0}^{L-1}(I - \Sigma \Aa[i]) &= \prod_{i=0}^{t-1}(I - \Sigma/\lambda_i)\\
        &= C\prod_{i=0}^{t-1}(\lambda_i I - \Sigma)\\
        &= C P^{L, (\alpha, \beta)}(\Sigma),
    \end{align*}
    where $P^{L, (\alpha, \beta)}(x) = T_L(\frac{2x - (\alpha + \beta)}{\beta - \alpha})$ is the degree $L$ Chebyshev polynomial whose roots are scaled into interval $[\alpha, \beta]$ and $C$ is a constant. Note that the constant $C$ is picked in a way that $P^{L, (\alpha, \beta)}(0) = 1$. From~\cite{}, this condition implies that $C \leq \frac{1}{2^{L\sqrt{\alpha/\beta} - 1}}$, hence picking degree $L = O(\ln(1/\epsilon)\sqrt{\beta/\alpha})$ implies that the value of $P^{L,(\alpha,\beta)}$ on the interval $[\alpha, \beta]$ is at most $\epsilon$. Now looking at the eigenvalues, this implies 
    \begin{align*}
       -\epsilon I \leq \prod_{i=0}^{t-1}(I - \Sigma/\lambda_i) \leq \epsilon I.
    \end{align*}
    Finally from Cauchy-Schwarz
    \begin{align*}
        &|{w^*}^\top\prod_{i=0}^{L-1}(I - \Sigma \Aa[i])x_q| \\
        &\leq \|w^*\|\|x_q\|\|\prod_{i=0}^{L-1}(I - \Sigma \Aa[i]\| \leq \epsilon \|w^*\|\|x_q\|, 
    \end{align*}
    which completes the proof.
\end{proof}

\section{Robustness of Looped Transformers}

\begin{lemma}[Transformer and loss formulas]\label{lem:matrixformat}
    Transformer output can be calculated as 
    \begin{align*}
        \TF{\Zz[0]} = {y^{(L)}}^\top = {y_q} - {w^*}^\top\prod_{i=0}^{t-1}(I - \Sigma \Aa[i])x_q.
    \end{align*}
    Furthermore, the loss can be written as
    \begin{align*}
        \losss(\{\Aa[t], 0\}_{t=0}^{L-1})
        = \E_X{\trace{\prod_{t=0}^{L-1}(I - \Sigma^{1/2} \Aa[t]\Sigma^{1/2})\prod_{t=L-1}^{0}(I - \Sigma^{1/2} \Aa[t]\Sigma^{1/2})}}.
    \end{align*}
\end{lemma}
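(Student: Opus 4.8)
The lemma has two parts — a closed form for the Transformer output and a closed form for the population loss — and the second part is a short Gaussian-moment computation built on top of the first. I would proceed in two steps.

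\emph{Step 1: a closed form for the iterates.} Write $M_t := \prod_{i=0}^{t-1}\pa{I - \Sigma \Aa[i]}$, with the conventions $M_0 = I$ and $\Sigma = \tfrac1n X X^\top$ (so the $\tfrac1n$ in the update~\eqref{eq:tftwo} is absorbed). In the restricted case~\eqref{eq:tfthree} with $\sigma(x)=x$ and $\dd[t]=0$, I claim
\begin{align*}
  {y^{(t)}}^\top = {w^*}^\top M_t X, \qquad y^{(t)}_q = {w^*}^\top\pa{M_t - I}\,x_q,
\end{align*}
which I would prove by induction on $t$. The base case $t=0$ is exactly realizability: $y^\top = {w^*}^\top X$ and $y^{(0)}_q = 0 = {w^*}^\top(I-I)x_q$. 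For the inductive step, substitute the hypothesis into the restricted recursions (with $\dd[t]=0$, $\sigma=\mathrm{id}$)
\begin{align*}
  {y^{(t+1)}}^\top = {y^{(t)}}^\top - \tfrac1n {y^{(t)}}^\top X^\top \Aa[t] X, \qquad
  y^{(t+1)}_q = y^{(t)}_q - \tfrac1n {y^{(t)}}^\top X^\top \Aa[t] x_q,
\end{align*}
and use $\tfrac1n XX^\top = \Sigma$ to get ${y^{(t+1)}}^\top = {w^*}^\top M_t\pa{I-\Sigma\Aa[t]}X = {w^*}^\top M_{t+1}X$; for the scalar, the telescoping identity $M_t\Sigma\Aa[t] = M_t - M_{t+1}$ gives $y^{(t+1)}_q = {w^*}^\top(M_t-I)x_q - {w^*}^\top(M_t-M_{t+1})x_q = {w^*}^\top(M_{t+1}-I)x_q$. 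Since the Transformer output is the negative of the $(d+1,n+1)$ entry, $\TF{\Zz[0]} = -y^{(L)}_q = y_q - {w^*}^\top\prod_{i=0}^{L-1}\pa{I-\Sigma\Aa[i]}x_q$ with $y_q={w^*}^\top x_q$ the target label, which is the first claim. (The vector identity ${y^{(t)}}^\top = {w^*}^\top M_t X$ is the one \Cref{thm:termination} invokes.)

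\emph{Step 2: the loss formula.} Condition on $X$, equivalently on $\Sigma$. By Step 1, $\TFd{\Zz[0]} - y = -{w^*}^\top M_L x_q$, so $\pa{\TFd{\Zz[0]} - y}^2 = {w^*}^\top M_L\, x_q x_q^\top\, M_L^\top w^*$. Averaging over $x_q\sim N(0,\Sigma)$ replaces $x_q x_q^\top$ by $\Sigma$; then averaging over $w^*\sim N(0,\Sigma^{-1})$, using $\E\big[w^*{w^*}^\top\big]=\Sigma^{-1}$ and cyclicity of the trace, yields
\begin{align*}
  \E_{w^*,x_q}\pa{\TFd{\Zz[0]} - y}^2 = \trace{M_L\,\Sigma\,M_L^\top\,\Sigma^{-1}}.
\end{align*}
Finally, conjugate by $\Sigma^{1/2}$: since $\Sigma^{-1/2}\pa{I-\Sigma\Aa[i]}\Sigma^{1/2} = I - \Sigma^{1/2}\Aa[i]\Sigma^{1/2}$, the matrix $N := \Sigma^{-1/2}M_L\Sigma^{1/2} = \prod_{i=0}^{L-1}\pa{I-\Sigma^{1/2}\Aa[i]\Sigma^{1/2}}$, and the surrounding $\Sigma^{\pm 1/2}$ factors cancel to give $\trace{M_L\Sigma M_L^\top\Sigma^{-1}} = \trace{NN^\top}$. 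Using that each $\Aa[i]$ is symmetric, transposition reverses the factor order, so $N^\top = \prod_{i=L-1}^{0}\pa{I-\Sigma^{1/2}\Aa[i]\Sigma^{1/2}}$ and $\trace{NN^\top}$ is precisely the trace in the statement. Taking $\E_X$ of both sides completes the proof.

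\textbf{Where the work is.} There is no genuine obstacle — this is a bookkeeping lemma — but three points need care: keeping the sign and $\tfrac1n$ conventions consistent (the $-1$ in the definition of $\TF{\cdot}$, and where the normalization of $\Sigma$ lives); strengthening the induction hypothesis to carry the row vector ${y^{(t)}}^\top$ alongside the scalar $y^{(t)}_q$ (both are needed to close the recursion, and the vector part is reused in \Cref{thm:termination}); and the symmetry of the $\Aa[t]$, which is exactly what lets the transposed product be rewritten with $\Aa[t]$ rather than ${\Aa[t]}^\top$, so that the two products in the loss formula look alike.
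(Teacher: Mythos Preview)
The proposal is correct and follows essentially the same approach as the paper: induction on $t$ for the closed-form iterates ${y^{(t)}}^\top$ and $y^{(t)}_q$, followed by the Gaussian-moment computation and the $\Sigma^{1/2}$-conjugation for the loss. Your bookkeeping of the sign convention and the $\tfrac1n$ normalization is in fact cleaner than the paper's own write-up, and your flag on the symmetry of $\Aa[t]$ is apt, since the lemma statement tacitly relies on it.
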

\begin{proof}
    First, we show the following recursive relations:
    \begin{align}
        &{y^{(t)}}^\top =  {w^*}^\top\prod_{i=0}^{t-1}(I - \Sigma \Aa[i])X~\label{eq:baseformulaone}\\
        &{y^{(t)}_{q}}^\top = {y_q} - {w^*}^\top\prod_{i=0}^{t-1}(I - \Sigma \Aa[i])x_q ,\label{eq:baseformulatwo}
    \end{align}
    We show this by induction on $t$. For the base case, we have ${w^*}^\top X = y^{(0)}$ and $y^{(0)}_q = 0 = y_q - {w^*}^\top x_q$. For the step of induction, we have Equations~\eqref{eq:baseformulaone} and~\eqref{eq:baseformulatwo} for $t-1$. We then open the following update rule:
    \begin{align*}
        \Zz[t] = \Zz[t-1] - \frac{1}{n}\Pl[t] M\Zz[t] \Aa[t] {\Zz[t]}^\top, 
    \end{align*}
    as
    \begin{align*}
        {y^{(t+1)}}^\top &= {y^{(t)}}^\top - \frac{1}{n}{y^{(t)}}^\top X^\top \Aa[t] X\\
        & = {w^*}^\top\prod_{i=0}^{t-1}(I - \Sigma \Aa[i])X - {w^*}^\top\prod_{i=0}^{t-1}(I - \Sigma \Aa[i])(XX^\top)\Aa[i]X\\
        & = {w^*}^\top\prod_{i=0}^{t}(I - \Sigma \Aa[t])X,
    \end{align*}
    where we used the fact that $XX^\top = \Sigma$. Moreover
    \begin{align*}
        y^{(t+1)}_q &= y^{(t)}_q - \frac{1}{n}{y^{(t)}}^\top X^\top Q x_q\\
        &= y_q - {w^*}^\top\prod_{i=0}^{t-1}(I - \Sigma \Aa[i])x_q 
    \\
    &- {w^*}^\top\prod_{i=0}^{t-1}(I - \Sigma \Aa[i])(XX^\top)\Aa[i]x_q\\
&= y_q - {w^*}^\top\prod_{i=0}^{t}(I - \Sigma \Aa[i])x_q. 
    \end{align*}
    Therefore
     \begin{align*}
       \E{X}{(y_q^{(L)} - y_q)^2} &= \E{X}{\Big({w^*}^\top\prod_{i=0}^{L-1}(I - \Sigma \Aa[i])x_q\Big)^2}\\
        &= \E{X}{\Big({w^*}^\top \Sigma^{1/2}\prod_{i=0}^{L-1}(I - \Sigma^{1/2} \Aa[i]\Sigma^{1/2}) {\Sigma^{-1/2}}x_q\Big)^2}.
    \end{align*}
    Now taking expectation with respect to $w^*\sim N(0, \Sigma^{-1}), x_q \sim N(0, {\Sigma})$:
    \begin{align*}
        \losss(\{\Aa[t], 0\}_{t=0}^{L-1})
        = \E{X}{\trace{\prod_{i=0}^{L-1}(I - \Sigma^{1/2} \Aa[i]\Sigma^{1/2}) \prod_{i=L-1}^{0}(I - \Sigma^{1/2} \Aa[i]\Sigma^{1/2})}}. 
    \end{align*}
\end{proof}

\subsection{Proof of Theorem~\ref{lem:multiblowup}}
\begin{proof}[Proof of Theorem~\ref{lem:multiblowup}]
    Consider a set of diagonal matrices $\{\Bb[t]\}_{t=0}^{L-1}$ with a set of $Ld$ distinct diagonal entries that satisfy $\alpha I \preccurlyeq {\Bb[t]}^{-1} \preccurlyeq 2\alpha I$ for all $0 \leq t \leq L-2$ and $\Bb[L-1]^{-1} = \beta I$, and define the distribution
    \begin{align*}
        P_{\alpha, \beta} \triangleq \frac{1}{L}\sum_{t=0}^{L-1} \delta_{\Bb[t]},
    \end{align*}
    where $\delta_{\Bb[t]}$ is a point mass on $\Bb[t]$. 
    Moreover, for parameter $\delta'$, consider an alternative distribution $\tilde P_{\alpha, \beta}$ on $\Sigma$ where it has at least $\epsilon$ amount of mass on matrices $4\alpha I \preccurlyeq \Sigma \preccurlyeq (1-\delta')\beta I$.  
    First, note that the global minimum of $\losss^{P_{\alpha, \beta}}$ is zero and $\losss^{P_{\alpha, \beta}}(\{\Aas[t]\}_{t=0}^{L-1},0)$ for $\Aas[t] \triangleq {\Bb[t]}^{-1} \forall 0\leq t\leq L-1$.  

    Now for an arbitrary matrix $\Sigma$ such that $8\alpha I \preccurlyeq \Sigma \preccurlyeq (1-\delta')\beta I$, we have for all $0 \leq t \leq L-2$:
    \begin{align*}
        \Sigma^{1/2} \Aas[t]\Sigma^{1/2} &= \Sigma^{1/2} {\Bb[t]}^{-1}\Sigma^{1/2} \\
        &\succcurlyeq \Sigma^{1/2} {2\alpha I}^{-1}\Sigma^{1/2} \\
        &\succcurlyeq 4 I,
    \end{align*}
    which implies
    \begin{align*}
        I - \Sigma^{1/2} \Aas[t]\Sigma^{1/2} \preccurlyeq 3 I.
    \end{align*}
    Therefore, for all $0\leq t \leq L-2$,
    \begin{align*}
        \pa{I - \Sigma^{1/2} \Aas[t]\Sigma^{1/2}}^2 \geq 9 I.
    \end{align*}
    On the other hand, for $t = L-1$:
    \begin{align*}
        \Sigma^{1/2} \Aas[L-1]\Sigma^{1/2} = \frac{1}{\beta}\Sigma \leq (1-\delta')I,
    \end{align*}
    which implies
    \begin{align*}
        I - \Sigma^{1/2} \Aas[L-1]\Sigma^{1/2} \succcurlyeq \delta' I.
    \end{align*}
    Hence, 
    \begin{align*}
        \pa{I - \Sigma^{1/2} \Aas[L-1]\Sigma^{1/2}}^2 \geq \delta'^2 I.
    \end{align*}
    Therefore, by repeating the inequality $C B C^\top \geq C D C^\top$ for arbitrary matrices $B,C,D$ where $B \succcurlyeq D$, we get
    \begin{align*}
        \prod_{t=0}^{L-1}(I - \Sigma^{1/2} \Aas[t]\Sigma^{1/2})\prod_{t=L-1}^{0}(I - \Sigma^{1/2} \Aas[t]\Sigma^{1/2})
        \succcurlyeq \delta' 9^{L-1} I.
    \end{align*}
    Therefore
    \begin{align*}
        \trace{\prod_{t=0}^{L-1}(I - \Sigma^{1/2} \Aas[t]\Sigma^{1/2})\prod_{t=L-1}^{0}(I - \Sigma^{1/2} \Aas[t]\Sigma^{1/2})}
        \geq d \delta' 9^{L-1}.
    \end{align*}
    On the other hand, note that from our assumption the distribution $\tilde P_{\alpha, \beta}$ has at least $\epsilon$ mass on matrices $\Sigma$ with $4\alpha I \preccurlyeq \Sigma \preccurlyeq (1-\delta')\beta I$. Therefore
    \begin{align*}
        \losss^{\tilde P_{\alpha, \beta}}(\{\Aas[t]\}_{t=0}^{L-1},0) \geq \epsilon \delta' d 9^{L-1}.
    \end{align*}
\end{proof}

\section{Proof of Theorem~\ref{thm:termination}}
\begin{proof}
    From Lemma~\ref{lem:matrixformat} (for $\dd[t] = 0), \ \forall t\leq L$, we have
    \begin{align*}
        &{y^{(t)}}^\top = {w^*}^\top\prod_{i=0}^{t-1}(I - \Sigma \Aa[i])X,\\
        &y^{(t)}_{q} = y_q - {w^*}^\top\prod_{i=0}^{t-1}(I - \Sigma \Aa[i])x_q.
    \end{align*}
    Therefore
    \begin{align*}
        \|y^{(L)}\|^2 &= \|X^\top\prod_{i=0}^{L-1}(I - \Sigma \Aa[i])w^*\|^2 \\
        &= \|\prod_{i=0}^{L-1}(I - \Sigma \Aa[i])w^*\|^2_{XX^\top}.
    \end{align*}
    Therefore,
    \begin{align*}
        |y_q - y^{(L)}_q| 
        &= |{w^*}^\top \prod_{i=0}^{L-1}(I - \Sigma \Aa[i])x_q|\\
        &\leq \|x_q\|_{(XX^\top)^{-1}}\|\prod_{i=0}^{L-1}(I - \Sigma \Aa[i])w^*\|_{XX^\top}\\
        &\leq \epsilon.
    \end{align*}
\end{proof}

\section{Proof of Theorem~\ref{lem:nonmon}}\label{sec:proofofnonmonotone}

\begin{proof}[Proof of Theorem~\ref{lem:nonmon}]
        Suppose $\Sigma^{1/2} = vv^\top$ is rank one and define
        \begin{align*}
            v^\top \Aa[i] v \norm{v}^2 = \gamma_i. 
        \end{align*}
        Then
        \begin{align*}
            \Sigma^{1/2}A^{(i)}\Sigma^{1/2} = v v^\top A^{(i)}v v^\top = \gamma_i \tilde v {\tilde v}^\top,
        \end{align*}
       for $\gamma_i > 2$, where $\tilde v$ is the normalized version of $v$. This implies
        \begin{align*}
            \pa{I - \Sigma^{1/2}A^{(i)}\Sigma^{1/2}}  = (1 - \gamma_i) \tilde v {\tilde v}^\top + \pa{I - \tilde v {\tilde v}^\top},
        \end{align*}
        which then gives
        \begin{align*}
            \losss^{\Sigma}(\{\Aa[i]\}_{i=0}^{L},0)&=\trace{\prod_{i=0}^{L-1}(I - \Sigma^{1/2} \Aa[i]\Sigma^{1/2})\prod_{i=L-1}^{0}(I - \Sigma^{1/2} \Aa[i]\Sigma^{1/2})}
            \\
            &= \prod_{i=0}^{L-1} (\gamma_i - 1)^{2} + (d-1).
        \end{align*}
        Therefore, the loss is increasing with depth if $\gamma_t > 2$ and decreasing if $\gamma_t < 2$. Therefore, for the behavior of the loss wrt to depth to be either increasing or decreasing for all $t$, for every $v$, we need to have that either all $\gamma_i$'s are great than or equal to $2$, or less than or equal to $2$. But this implies that for all $v$ and all $i,j$:
        \begin{align*}
            v^\top \Aa[i] v = v^\top \Aa[j] v.
        \end{align*}
        Suppose this is not the case for some $v,i,j$, namely
        \begin{align*}
            v^\top \Aa[i] v < v^\top \Aa[j] v.
        \end{align*}
        Then, we can scale $v$ so that $v^\top \Aa[i] v < 2$ and $v^\top \Aa[i] v > 2$ which contradicts the monotonicity of the loss wrt to depth. Therefore, for all $v$, and all $i,j$, we showed
        \begin{align*}
            v^\top A^{(i)} v = v^\top A^{(j)} v,
        \end{align*}
        which implies for all $i,j \in [L]$ we should have
        \begin{align*}
            A^{(i)} = A^{(j)}.
        \end{align*}

        Next, we show the argument for loop models. Namely, suppose $A^{(1)} = \dots = A^{(L)} = A$. The loss in this case becomes
        \begin{align}
            \losss^{\Sigma}(\{A\}_{i=0}^{L},0)&=\trace{(I - \Sigma^{1/2} A\Sigma^{1/2})^{2L}}.
        \end{align}
        Now let $\beta_1 < \dots < \beta_{d-1}$ are the eigenvalues of $\Sigma^{1/2}A\Sigma^{1/2}$. Then
        \begin{align}
            \losss^{\Sigma}(\{A\}_{i=0}^{L},0)
            = \sum_{i=0}^{d_1} (1-\beta_i)^{2L}.\label{eq:otherterms}
        \end{align}
        Now if $|1-\beta_i| \leq 1$ for all $0\leq i \leq d-1$, then each term  $(1-\beta_i)^{2L}$ is decreasing in $L$. Otherwise, suppose there is $j$ such that $|1-\beta_j| > 1$. Then, defining $L_0 = \ln(d/(|1-\beta_j| - 1))/\ln(|1-\beta_j|)$, then for $L \geq L_0$, we have
        \begin{align*}
            (1-\beta_j)^{2L} \geq \frac{d}{|1-\beta_j| - 1}. 
        \end{align*}
        Hence, 
        \begin{align}
            (1-\beta_j)^{2(L+1)} &\geq (1-\beta_j)^{2L} + (1-\beta_j)^{2L} \times (|1-\beta_j| - 1)\\
            &\geq (1-\beta_j)^{2L} + d.
        \end{align}
        This means going from $L$ to $L+1$, the loss increase by at least $d$, while the potential decrease in the loss from the other $d-1$ terms in~\eqref{eq:otherterms} is at most $d-1$. Hence, we showed that for all $L \geq L_0$, the loss is increasing in this case, which compeletes the proof.

        
    \end{proof}

\end{document}